\pdfoutput=1
\documentclass[letterpaper]{article} 
\usepackage[preprint]{aaai2027}  
\usepackage[hyphens]{url}  
\usepackage{graphicx} 
\usepackage{natbib}  
\usepackage{caption}  
\usepackage{amsmath,amssymb,amsfonts,amsthm,mathtools,bm}
\usepackage{xcolor}
\usepackage{booktabs}

\newtheorem{theorem}{Theorem}
\newtheorem{lemma}{Lemma}
\newtheorem{assumption}{Assumption}

\newtheorem{corollary}{Corollary}

\DeclareMathOperator{\Cov}{Cov}
\DeclareMathOperator{\tr}{tr}
\DeclareMathOperator{\diag}{diag}

\newcommand{\R}{\mathbb R}
\newcommand{\E}{\mathbb E}
\newcommand{\Normal}{\mathcal N}

\title{On the Identifiability of Controlled World Models}
\author{Xiangteng Zhang\textsuperscript{1}\equalcontrib, Yang Guan\textsuperscript{1}\equalcontrib, Bo Zhang\textsuperscript{2}, \\Hongyang Li\textsuperscript{3}, Ya-Qin Zhang\textsuperscript{4}, and Shengbo Eben Li\textsuperscript{1,5}\corresponding}
\affiliations{
	\textsuperscript{1}School of Vehicle and Mobility, Tsinghua University \quad
	\textsuperscript{2}Didi Voyager Labs, Didi Autonomous Driving \\
	\textsuperscript{3}School of Computing and Data Science, The University of Hong Kong \\
	\textsuperscript{4}Institute for AI Industry Research, Tsinghua University \quad
	\textsuperscript{5}College of AI, Tsinghua University
}

\begin{document}
	\maketitle
	
	\begin{abstract}
		World model serves as a promising tool to infer environment dynamics under high-dimensional observations and candidate actions.
		Recently, LeCun's JEPA provides a compelling framework for learning such models in representation space. 
		Its action-conditioned extension plays a central role in visual control and latent-space planning, but leaves a fundamental question: can it recover the controlled dynamics from nonlinear observations?
		This paper presents a joint identifiability condition for controlled world models with Gaussian latent states, which consists of two coupled components: (1) representation identifiability and (2) transition identifiability. 
		The former depends on the spectral separation property while the latter is related to non-degenerate variation of conditional action. 
		We prove that when this condition holds, minimizing the LeJEPA-style predictive objective can recover both latent states and controlled dynamics in the sense of orthogonal transformation. 
		We further prove that the upper bound of transition prediction error is inversely proportional to the spectral separation margin.
		We also characterize an attainable amplification of counterfactual prediction error that scales inversely with the weakest conditional action-excitation margin.
		The theoretical predictions are empirically supported across four nonlinear observation settings.
	\end{abstract}
	
	\section{Introduction}
\label{sec:introduction}

\begin{figure*}[t!]
	\centering
	\includegraphics[width=0.8\textwidth]{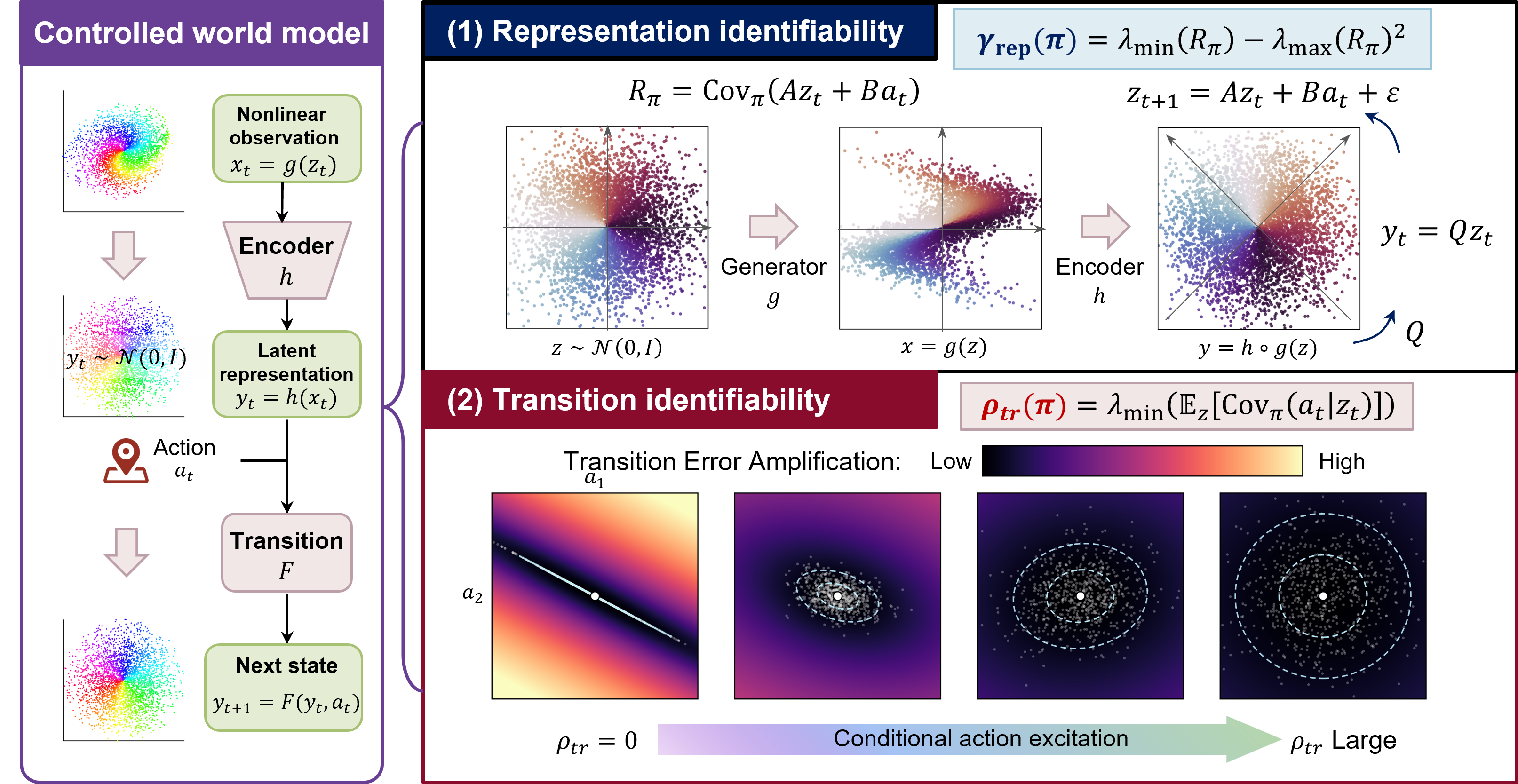}
	\caption{Overview of controlled world-model identifiability. An encoder maps nonlinear observations to a standard-Gaussian latent representation, and an action-conditioned predictor models the next representation. Representation identifiability is governed by a predictable-signal spectral margin, while transition identifiability additionally depends on the weakest conditional action excitation.}
	\label{fig:method-overview}
\end{figure*}

World models seek to capture how an environment evolves, enabling an agent to anticipate future states, simulate the consequences of its behavior, and support decision-making and planning through a latent predictive model. They have become increasingly important in developing physics-native intelligence, including embodied robotics, autonomous driving, and physical data generation. In general, world models encode images, videos, or sensory measurements into compact state representations and learn environmental dynamics in the latent space. Such representations can subsequently support behavior prediction, policy learning, value estimation, and trajectory optimization. In control settings, these capabilities require a model that captures how the future state varies across candidate actions.

Controlled world models are defined as a class of world models characterized by latent transition predictor conditioned on both the current latent state and the action candidates.
By explicitly relating states, actions, and future states, they provide the structure needed to compare candidate actions, evaluate counterfactual outcomes, and support model-based planning. 
A prominent framework for learning such models from high-dimensional observations is the Joint-Embedding Predictive Architecture (JEPA), which trains model parameters by minimizing the prediction error between two temporally related observations. Meanwhile, it uses regularization or distributional constraints on the latent space to prevent representational collapse \citep{lecun2022Path,assran2023SelfSupervised,bardes2024Revisiting,balestriero2025LeJEPA}. Recent methods such as LeWorldModel and V-JEPA 2-AC incorporate actions into latent prediction and have demonstrated promising performance in visual control, policy learning, and latent-space planning \citep{assran2025VJEPA, hafner2025Mastering,zhan2025Bootstrap,maes2026LeWorldModel}. 
Despite these empirical advances, existing results do not establish whether controlled world models can correctly identify both the latent state representation and the underlying transition dynamics. This question, which we refer to as the identifiability problem of controlled world models, remains theoretically underexplored.

Theoretical analysis of dynamical systems from data has advanced primarily in settings where system states can be directly observed. Recent work in data-driven control has examined controllability from sampled transitions \citep{yang2024Controllability}, stability certification from finite data \citep{yang2024Stability}, and canonical representations of transition data \citep{zhan2025Canonical}. These studies characterize system properties and transition structure from data, but presuppose a fixed and observed state coordinate system. Controlled world models do not have access to such coordinates because the latent state must itself be inferred from nonlinear observations. This additional ambiguity gives rise to two coupled identification problems. Representation identifiability concerns whether the encoder identifies the latent state, which is generally impossible without additional structure \citep{locatello2019Challenging,khemakhem2020Variational,hyvarinen2024Identifiability}. Under Gaussian assumptions, recent LeJEPA theory establishes representation identifiability up to an orthogonal transformation \citep{klindt2026When}, yet leaves the controlled transition unidentified. Transition identifiability concerns whether the predictor identifies the controlled dynamics. Classical system identification emphasizes the need for informative action variation, often formalized through persistent excitation \citep{willems2005Note}, but assumes direct access to state coordinates and therefore does not address representation identification. These two lines of theory address complementary parts of the problem, but neither establishes when action-conditioned latent prediction jointly identifies the state representation and controlled dynamics.

In this paper, we fill the gap by developing a systematic identifiability theory for controlled world models under nonlinear observations. Representation identifiability is characterized by a spectral separation property of the predictable signal induced by the behavior policy, whereas transition identifiability is governed by the non-degeneracy of conditional action variation given the current state. Under controlled spectral separation and sufficient conditional action excitation, we prove that every global minimizer of the population objective identifies the true latent state and the full controlled conditional-mean dynamics under a common orthogonal transformation. We then establish quantitative identifiability guarantees in the presence of encoder excess risk and predictor error. Our analysis further shows that conditional action excitation determines the strength of transition identification: weakly excited action directions can admit small training error while producing substantially larger errors in counterfactual action responses. Finally, we connect the identified transition model to goal-conditioned latent planning and empirically examine how action coverage affects representation identifiability, transition identifiability, counterfactual prediction, and planning across nonlinear observation maps and behavior policies.

Our main contributions are summarized as follows:
\begin{itemize}
	\item We establish sufficient conditions for joint representation and transition identifiability in controlled world models with invertible nonlinear observations and linear Gaussian latent dynamics. Representation identifiability is guaranteed under a spectral separation condition jointly determined by the controlled dynamics and behavior policy, whereas transition identification over the full state-action space is guaranteed under positive conditional action excitation. Together, these conditions guarantee recovery of the latent state and the controlled conditional mean up to a common orthogonal transformation.
	
	\item We derive quantitative identification guarantees under imperfect optimization by using a Hermite decomposition of the learned representation, and show that the upper bounds on transition prediction error scale inversely with the spectral separation margin, while predictor approximation error enters the transition bound additively.
	
	\item We characterize the attainable amplification of counterfactual prediction error by perturbing the predictor along the least excited action direction, and show that the amplification factor is inversely proportional to the weakest conditional action-excitation margin.
\end{itemize}

	\section{Related Work}
\label{sec:related-work}

\paragraph{Latent world models for planning.}
World models have been widely studied as internal predictive models for planning and control. 
Early neural world models learned compact latent dynamics for imagination-based policy learning \citep{ha2018World,hafner2019Learning,mu2020Mixed}, while later model-based reinforcement learning methods developed probabilistic latent dynamics, value-aware dynamics, and model-predictive control \citep{hansen2022Temporal,hansen2024TDMPC2,zhan2025Bootstrap, li2023Reinforcement} in learned state spaces \citep{hafner2020Dream,hafner2025Mastering,schrittwieser2020Mastering}. 
These results establish that learned latent dynamics can be useful for planning, but not that the learned state or controlled transition is identifiable from nonlinear observations. 
Our work targets this missing structural question: when does a latent world model identify both the state and the controlled conditional-mean transition?

\paragraph{Joint embedding predictive learning and latent prediction.}
Joint embedding predictive architectures learn by predicting target representations rather than reconstructing pixels, making them attractive for representation learning and latent world modeling \citep{assran2025VJEPA,bardes2024Revisiting}. 
Recent world models extend this principle to latent dynamics without pixel reconstruction, while using regularization or target encoders to prevent representational collapse \citep{balestriero2025LeJEPA,maes2026LeWorldModel}. 
Under Gaussian latent assumptions, recent theory for LeJEPA shows that prediction from passive observation pairs can recover latent states up to an orthogonal transformation, but does not establish identifiability of dynamics conditioned on actions \citep{klindt2026When}. 
We study this controlled setting and show that, when data are generated by a behavior policy, representation recovery and controlled transition recovery are characterized by distinct identifiability margins.

\paragraph{Action-conditioned and latent-action world models.}
A growing body of work incorporates actions into latent world models so that different candidate actions induce different predicted futures. 
This includes action-conditioned predictors for robot planning, end-to-end latent world models with action-sensitive objectives, and latent-action models that infer abstract actions from videos or visual feature differences \citep{assran2025VJEPA,maes2026LeWorldModel}. 
These methods reflect an important shift from passive future prediction to controlled prediction, where the model is expected to represent how the world responds to alternative actions. 
However, action conditioning is usually justified empirically, through action sensitivity, rollout quality, or downstream planning performance. 
The gap is not whether actions can be inserted into latent predictors, but when action-conditioned prediction makes the action effect itself identifiable.

\paragraph{Identifiability and system identification.}
Identifiability is a central difficulty in nonlinear representation learning. 
Classical results in nonlinear ICA and unsupervised disentanglement show that latent variables are generally not identifiable from nonlinear observations without additional assumptions or structural signals \citep{hyvarinen1999Nonlinear,locatello2019Challenging}. 
Positive identifiability results introduce such structure through temporal dependence, auxiliary variables, conditional latent distributions, or interventions \citep{hyvarinen2016Unsupervised,hyvarinen2019Nonlinear,khemakhem2020Variational,scholkopf2021Causal,vonkugelgen2023Nonparametric,varici2024General}. 
In parallel, classical system identification emphasizes that identifying controlled dynamics requires sufficiently informative inputs, often formalized through persistent excitation or related rank conditions \citep{vanoverschee1996Subspace,willems2005Note}. 
These literatures cover complementary pieces of the problem, but neither directly gives identifiability for controlled world models: representation-learning results usually do not identify controlled transitions, while system-identification results often assume observed or linearly measured states. 
Our analysis combines the two perspectives under nonlinear observations, showing when sufficient conditional action excitation and a representation-identifiability spectral condition make the representation and controlled conditional-mean transition jointly identifiable.

	\section{Method}
\label{sec:method}

\subsection{Problem Formulation}
\label{subsec:problem-formulation}

We formulate controlled latent prediction to separate two identification problems that are entangled in observations: identifying a valid state coordinate and identifying how that state responds to actions. An action-conditioned predictor is necessary because a state-only predictor sees only the transition averaged over the behavior policy; however, conditioning on the action is informative only to the extent that the data contain action variation conditional on the state.

\paragraph{World and behavior distribution.}
Let \(z_t\in\R^d\) be the latent state and \(x_t=g(z_t)\) its observation under an unknown invertible map \(g\). Invertibility isolates the representation problem from information loss in the observation process: the learner must undo a nonlinear change of coordinates, but the observation still contains the full state. Trajectories are collected under a behavior policy \(\pi(a_t\mid z_t)\), which, together with the environment, induces the training distribution \(P_\pi\). Making this distribution explicit is essential because transition identification depends on the policy's conditional action support.

We use a stationary linear--Gaussian latent system as the minimal setting in which these effects can be analyzed exactly. Gaussianity gives a tractable spectral decomposition and is compatible with the standard-Gaussian representation constraint; linear controlled dynamics keep state and action effects separate; and stationarity provides a common marginal distribution across consecutive representations.

\begin{assumption}[Jointly Gaussian behavior distribution]
	\label{ass:joint-gaussian-policy}
	The latent state satisfies \(z_t\sim\Normal(0,I_d)\), and the behavior policy induces a centered jointly Gaussian distribution \(P_\pi(z_t,a_t)\). The state and action may be dependent, and the conditional action covariance may be singular.
\end{assumption}

\begin{assumption}[Stationary controlled transition]
	\label{ass:controlled-transition}
	The latent dynamics are \(z_{t+1}=Az_t+Ba_t+\xi_t\), where \(\xi_t\) is zero-mean Gaussian noise independent of \((z_t,a_t)\). The process is stationary with \(z_{t+1}\sim\Normal(0,I_d)\).
\end{assumption}

\paragraph{Learner.}
The learner consists of an encoder \(h:\mathcal X\to\R^d\) and a deterministic action-conditioned predictor \(F\). We study its population objective, defined as the expected prediction loss under the stationary data-generating distribution \(P_\pi\) induced by the behavior policy:
\begin{equation}
	\mathcal L_\pi(h,F)
	=
	\E_{P_\pi}
	\left[
	\|h(x_{t+1})-F(h(x_t),a_t)\|_2^2
	\right].
	\label{eq:loss}
\end{equation}
The squared loss makes the predictor target the conditional mean in representation space rather than the full conditional distribution. We constrain \(h(x_t)\sim\Normal(0,I_d)\), as in LeJEPA-style Gaussian regularization, to prevent collapse and fix the representation scale while retaining the unavoidable rotational symmetry of the latent distribution.

\begin{assumption}[Predictor richness and continuity]
	\label{ass:predictor-richness}
	For every admissible encoder \(h\), the predictor class can realize the optimal conditional-mean predictor
	\(F_h^\star(h(x_t),a_t):=\E[h(x_{t+1})\mid h(x_t),a_t]\).
	The learned predictor is continuous in its representation and action inputs.
\end{assumption}

\subsection{Identifiability of Controlled World Models}
\label{subsec:state-transition-identifiability}

\paragraph{Representation identifiability.}
We define representation identifiability as the existence of an orthogonal matrix \(Q\in O(d)\) such that
\[
h(g(z))=Qz.
\]
This defines latent-state identifiability up to an orthogonal transformation.

\paragraph{Transition identifiability.}
We define transition identifiability under \(P_\pi\) as
\[
F(y,a)=QAQ^\top y+QBa
\]
This definition can also be denoted as \(F(h(g(z)),a)=Q(Az+Ba)\). This concerns the controlled conditional mean in the identified coordinates.

\paragraph{Identification margins.}
Representation identifiability is governed by the predictable signal available under the behavior policy:
\begin{equation}
	\begin{aligned}
		R_\pi
		&:=\Cov_{P_\pi}\!\left(\E[z_{t+1}\mid z_t,a_t]\right)
		=\Cov_{P_\pi}(Az_t+Ba_t),\\
		\gamma_{\mathrm{rep}}(\pi)
		&:=\lambda_{\min}(R_\pi)-\lambda_{\max}(R_\pi)^2 .
	\end{aligned}
	\label{eq:representation_margin}
\end{equation}
Stationarity gives \(\Cov(\xi_t)=I_d-R_\pi\succeq0\). The representation margin \(\gamma_{\mathrm{rep}}(\pi)\) compares the weakest first-order predictable direction with the strongest higher-order nonlinear alternative.

Transition identifiability is governed by the action variation that remains after conditioning on the current state:
\begin{equation}
	\begin{aligned}
		\Sigma_{\mathrm{tr}}(\pi)
		&:=\E_{z_t}\!\left[\Cov_\pi(a_t\mid z_t)\right],\\
		\rho_{\mathrm{tr}}(\pi)
		&:=\lambda_{\min}\!\left(\Sigma_{\mathrm{tr}}(\pi)\right).
	\end{aligned}
	\label{eq:transition_margin}
\end{equation}
The transition margin \(\rho_{\mathrm{tr}}(\pi)\) is the weakest conditional action excitation. Under the jointly Gaussian behavior model, \(\Cov_\pi(a_t\mid z_t)\) does not depend on \(z_t\); hence, \(\rho_{\mathrm{tr}}(\pi)>0\) gives full conditional support over the action space.

The two margins address different failure modes. A positive \(\gamma_{\mathrm{rep}}(\pi)\) separates the latent coordinates from nonlinear predictive features. A positive \(\rho_{\mathrm{tr}}(\pi)\) distinguishes the controlled response in every action direction and gives full support to the joint Gaussian state-action distribution. Together with continuity of \(F\), this support extends transition identification from an almost-sure statement under \(P_\pi\) to the entire state-action space.

We can now state the main identification result. In the Gaussian setting, the eigenvalues of \(R_\pi\) determine the predictability of the first-order latent components, whereas any higher-order component has predictable energy at most \(\lambda_{\max}(R_\pi)^2\). A positive representation margin therefore makes every true latent direction more predictable than any nonlinear alternative.

\begin{theorem}[Identifiability of state and controlled transition]
	\label{thm:state-transition-identifiability}
	Suppose Assumptions~\ref{ass:joint-gaussian-policy}--\ref{ass:predictor-richness} hold and the observation map \(g\) is invertible. Consider the objective in Eq.~\eqref{eq:loss} over encoders satisfying \(h(x_t)\sim\Normal(0,I_d)\). If
	\[
	\gamma_{\mathrm{rep}}(\pi)>0
	\qquad\text{and}\qquad
	\rho_{\mathrm{tr}}(\pi)>0,
	\]
	then every global minimizer \((h,F)\) of \(\mathcal L_\pi\) identifies the latent state and the controlled conditional mean up to an orthogonal transformation. Specifically, there exists \(Q\in O(d)\) such that
	\[
	\begin{aligned}
		&h(g(z_t))=Qz_t,\\
		&F(h(g(z_t)),a_t)=Q(Az_t+Ba_t),\quad \text{a.s.}
	\end{aligned}
	\]
	Moreover, in the identified coordinates, the predictor satisfies
	\(F(y,a)=QAQ^\top y+QBa\) for every
	\((y,a)\in\R^d\times\R^m\).
	
	The optimal objective value is
	\(\operatorname{tr}(\Cov(\xi_t))=d-\operatorname{tr}(R_\pi)\).
	Conversely, for any \(Q\in O(d)\), the encoder defined by
	\(h_Q(g(z))=Qz\) and the predictor
	\(F_Q(y,a)=QAQ^\top y+QBa\) attain this minimum.
\end{theorem}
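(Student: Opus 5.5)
Write \(f:=h\circ g\), so that \(y_t=f(z_t)\sim\Normal(0,I_d)\). Because \(g\) is invertible, the encoder is just a measurable map of the latent state. The plan has three stages. First reduce the problem to the encoder alone by optimizing out \(F\). Then expand \(f\) in the Hermite basis of \(\Normal(0,I_d)\) and show that the encoder term is maximized only when \(f\) is linear. Finally, read off the transition and extend it to the whole state-action space.

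\textbf{Step 1 (reduction to the encoder).} For fixed \(h\), the squared loss is minimized by the conditional mean \(F_h^\star\), which is realizable by Assumption~\ref{ass:predictor-richness}. This gives
\[
\mathcal L_\pi(h,F)\ \ge\ \E\|y_{t+1}\|^2-\E\|\E[y_{t+1}\mid y_t,a_t]\|^2 \;=\; d-\E\|\E[f(z_{t+1})\mid y_t,a_t]\|^2,
\]
with equality iff \(F=F_h^\star\) almost surely. Since \(\sigma(y_t,a_t)\subseteq\sigma(z_t,a_t)\), conditional Jensen yields
\[
\E\|\E[f(z_{t+1})\mid y_t,a_t]\|^2\le\E\|\E[f(z_{t+1})\mid z_t,a_t]\|^2 .
\]
It therefore suffices to show that
\[
\Phi(f):=\E\|\E[f(z_{t+1})\mid z_t,a_t]\|^2\le\tr(R_\pi),
\]
with equality iff \(f(z)=Qz\) for some \(Q\in O(d)\).

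\textbf{Step 2 (Hermite decomposition; main obstacle).} Set \(\mu=Az_t+Ba_t\) and \(\xi\sim\Normal(0,I-R_\pi)\), so that \(z_{t+1}=\mu+\xi\) with \(\mu\sim\Normal(0,R_\pi)\). The map \(f\mapsto\E[f(\mu+\xi)\mid\mu]\) is the adjoint of a Gaussian ``correlation'' operator whose singular structure is known. Diagonalize \(R_\pi=U\Lambda U^\top\) and expand each coordinate of \(f\) in multivariate Hermite polynomials \(H_\alpha\) in the rotated coordinates \(U^\top z\). The Mehler/Ornstein--Uhlenbeck-type identity gives
\[
\E[H_\alpha(U^\top z_{t+1})\mid\mu]\;\propto\;\prod_i \lambda_i^{\alpha_i/2}\,H_\alpha\!\bigl(\Lambda^{-1/2}U^\top\mu\bigr).
\]
Because \(\Lambda^{-1/2}U^\top\mu\) is standard Gaussian, these images are orthonormal across \(\alpha\). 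Hence \(\Phi(f)=\sum_\alpha\prod_i\lambda_i^{\alpha_i}\|c_\alpha\|^2\), where \(c_\alpha\) are the Hermite coefficients of \(f\).

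The constraint \(f(z)\sim\Normal(0,I_d)\) imposes three conditions:
\[
c_0=0,\qquad \sum_{\alpha}\|c_\alpha\|^2=d,\qquad \sum_{\alpha} c_\alpha c_\alpha^\top=I_d .
\]
Degree-\(k\ge2\) terms have weight at most \(\lambda_{\max}^2\) (using \(\lambda_{\max}\le1\)), whereas degree-one terms have weight \(\lambda_i\ge\lambda_{\min}>\lambda_{\max}^2\) by \(\gamma_{\mathrm{rep}}>0\). So \(\Phi\) is maximized only when all energy lies in degree one, i.e.\ \(f(z)=Mz\). The marginal constraint then forces \(MM^\top=I\), so \(M=Q\in O(d)\), and
\[
\Phi=\tr(Q R_\pi Q^\top)=\tr(R_\pi).
\]
Equality in the Jensen step holds automatically in this case, since \(y_t\) generates \(\sigma(z_t)\). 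The delicate part is the trace bound over the linear part: we need
\[
\sum_i\lambda_i\,\|c_{e_i}\|^2\le\tr(R_\pi)
\]
subject to the column-norm structure. I would handle it by noting that the \(d\times d\) linear coefficient matrix \(M\) satisfies \(MM^\top\preceq I\), so \(\tr(M\Lambda M^\top)\le\tr\Lambda\). Strictness then comes from the margin whenever any higher-order energy is present.

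\textbf{Step 3 (transition and converse).} With \(h(g(z))=Qz\), the conditional mean is
\[
\E[Qz_{t+1}\mid z_t,a_t]=Q(Az_t+Ba_t),
\]
so \(F(y,a)=QAQ^\top y+QBa\) holds \(P_\pi\)-a.s. Since \(\rho_{\mathrm{tr}}>0\), the conditional covariance of \(a_t\) given \(z_t\) is nonsingular. Hence \((y_t,a_t)\) is a nondegenerate Gaussian with full support on \(\R^d\times\R^m\). Two continuous functions that agree a.s.\ under a full-support measure agree everywhere, which gives the global formula for \(F\).

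The optimal value is \(d-\tr(R_\pi)=\tr(\Cov(\xi_t))\). The converse follows by direct evaluation: the loss of \((h_Q,F_Q)\) is \(\E\|Q\xi_t\|^2=\tr\Cov(\xi_t)\), which matches the bound from Steps 1–2.
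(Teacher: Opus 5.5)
Your proposal is correct and follows the paper's proof essentially step for step: the conditional-mean reduction, conditioning up to $(z_t,a_t)$, the Hermite--Mehler contraction (degree-one weights $\lambda_i$, higher-chaos weights at most $\lambda_{\max}^2$), the whitening constraint $MM^\top\preceq I$, and the full-support-plus-continuity extension via $\rho_{\mathrm{tr}}(\pi)>0$; diagonalizing $R_\pi$ to get the exact identity $\Phi(f)=\sum_\alpha\lambda^\alpha\|c_\alpha\|^2$ simply makes concrete the paper's appeal to canonical correlations. The one line you still need to write out is the refined linear bound $\tr(M\Lambda M^\top)=\tr\Lambda-\tr\bigl(\Lambda(I-M^\top M)\bigr)\le\tr\Lambda-\lambda_{\min}\,(d-\|M\|_F^2)$, which with $\|M\|_F^2=d-r$ gives $\Phi(f)\le\tr(R_\pi)-\gamma_{\mathrm{rep}}(\pi)\,r$, exactly as in the paper's controlled Hermite energy bound; the weaker bound $\tr(M\Lambda M^\top)\le\tr\Lambda$ that you state does not by itself give strictness when $r>0$.
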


The theorem separates two identification problems that are often conflated. The representation margin rules out nonlinear reparameterizations and identifies the latent state up to an orthogonal transformation. Conditional action excitation then identifies how that state responds to every action. Positive excitation gives full state-action support; together with continuity of \(F\), this upgrades an almost-sure on-policy statement to identification over the entire state-action space.

\paragraph{Proof sketch.}
For a fixed encoder, squared loss is minimized by the conditional mean of \(h(x_{t+1})\) given \((h(x_t),a_t)\). Minimizing prediction risk is thus equivalent to maximizing the predictable energy of the representation. Expand \(h\circ g\) in the Gaussian Hermite basis. First-order components are governed by the eigenvalues of \(R_\pi\), while every higher-order component contributes at most \(\lambda_{\max}(R_\pi)^2\) times its variance. When \(\gamma_{\mathrm{rep}}(\pi)>0\), an optimum can therefore contain only first-order components. The Gaussian constraint makes the resulting linear map orthogonal, so \(h(g(z))=Qz\). Substitution into the dynamics yields the predictor \(Q(Az+Ba)\); positive conditional excitation and continuity extend the identity to every \((y,a)\in\R^d\times\R^m\). The full proof appears in the first proof section of the appendix.

\subsection{Approximate Identifiability}
\label{subsec:quantitative-identifiability}

The preceding result assumes exact global optimality. To quantify graceful degradation away from this ideal, let
\(F_h^\star(h(x_t),a_t):=\E[h(x_{t+1})\mid h(x_t),a_t]\)
denote its optimal conditional-mean predictor, and let
\(\mathcal L^\star:=d-\tr(R_\pi)\) denote the minimum risk in Theorem~\ref{thm:state-transition-identifiability}. We separate error due to the representation from error due to the learned predictor by defining
\(\Delta_{\rm enc}:=\mathcal L_\pi(h,F_h^\star)-\mathcal L^\star\)
and the predictor-side error as
\(\Delta_{\rm pred}:=\E_{P_\pi}\|F(h(x_t),a_t)-F_h^\star(h(x_t),a_t)\|^2\).

\begin{theorem}[Quantitative identifiability under the behavior policy]
	\label{thm:quantitative-identifiability}
	Suppose the conditions of Theorem~\ref{thm:state-transition-identifiability} hold except that \((h,F)\) need not be a global minimizer of \(\mathcal L_\pi\). There exist universal constants \(c,C>0\) such that, if \(h(x_t)\sim\Normal(0,I_d)\) and
	\(\Delta_{\rm enc}\le c\,\gamma_{\mathrm{rep}}(\pi)\), then there exists \(Q\in O(d)\) such that, with
	\(\eta:=\Delta_{\rm enc}/\gamma_{\mathrm{rep}}(\pi)\),
	\[
	\E\|h(g(z_t))-Qz_t\|^2
	\le C\eta,
	\]
	Moreover, defining
	\(\varepsilon_F(z,a):=F(h(g(z)),a)-Q(Az+Ba)\), we have
	\begin{equation}
		\E_{P_\pi}\!\left[\|\varepsilon_F(z_t,a_t)\|^2\right]
		\le C\Delta_{\rm pred}
		+C\bigl(1+\|A\|_{\rm op}^2\bigr)\eta.
	\end{equation}
\end{theorem}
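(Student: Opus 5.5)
We will make the Hermite-expansion argument behind Theorem~\ref{thm:state-transition-identifiability} quantitative. Write \(f:=h\circ g\), so \(y_t:=f(z_t)\sim\Normal(0,I_d)\), and split \(f=f_1+f_{\ge2}\) into its first-order Hermite part \(f_1(z)=Mz\) and higher-order remainder; the constant term vanishes since \(\E y_t=0\).

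\textbf{Excess risk as predictable-energy deficit.} Conditional expectations are orthogonal projections, so for \(F_h^\star\)
\[
\mathcal L_\pi(h,F_h^\star)=d-\E\|\E[f(z_{t+1})\mid y_t,a_t]\|^2 .
\]
Since \(f\) is invertible on its range, conditioning on \(y_t\) is the same as conditioning on \(z_t\). Hence it suffices to bound \(\E\|\E[f(z_{t+1})\mid z_t,a_t]\|^2\). Given \((z_t,a_t)\), \(z_{t+1}\) is Gaussian with mean \(m=Az_t+Ba_t\) and covariance \(I-R_\pi\). The Ornstein--Uhlenbeck/Mehler structure then yields two facts. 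First, a degree-\(k\) Hermite component of unit variance has conditional-mean energy at most \(\lambda_{\max}(R_\pi)^k\le\lambda_{\max}(R_\pi)^2\) for \(k\ge2\). Second, distinct degrees remain orthogonal after conditioning. We take both as established in the proof of Theorem~\ref{thm:state-transition-identifiability}. The linear part contributes \(\tr(MR_\pi M^\top)\). Writing \(s:=\E\|f_{\ge2}\|^2\), so that \(\|M\|_F^2=d-s\), we get
\[
\Delta_{\rm enc}\ \ge\ \tr\bigl(M(I-R_\pi)M^\top\bigr)... \text{compared against } \tr(I-R_\pi).
\]
More carefully, we compare \(\tr(R_\pi)-\tr(MR_\pi M^\top)-s\lambda_{\max}^2\). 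The covariance constraint gives \(M M^\top+\Cov(f_{\ge2})=I\), so \(MM^\top=I-S\) with \(\tr S=s\). A von Neumann trace inequality then gives \(\tr(MR_\pi M^\top)\le\tr(R_\pi)-s\lambda_{\min}(R_\pi)\). This yields the key bound
\[
\Delta_{\rm enc}\ge s\,\gamma_{\mathrm{rep}}(\pi),\qquad s\le\eta .
\]

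\textbf{From small \(s\) to a rotation.} We have \(MM^\top=I-S\) with \(0\preceq S\), \(\tr S=s\). Take the polar decomposition \(M=(MM^\top)^{1/2}Q\). Then \(\|M-Q\|_F^2=\tr\bigl((I-(I-S)^{1/2})^2\bigr)\le\tr S=s\). The Hermite orthogonality gives
\[
\E\|f(z)-Qz\|^2=\|M-Q\|_F^2+s\le 2\eta .
\]
The smallness condition \(\Delta_{\rm enc}\le c\gamma_{\mathrm{rep}}\) is only needed to keep constants universal, e.g.\ so that \(\eta\le1\).

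\textbf{Transition error.} We decompose
\[
\varepsilon_F=(F-F_h^\star)+\bigl(F_h^\star-\E[Qz_{t+1}\mid z_t,a_t]\bigr),
\]
using \(\E[Qz_{t+1}\mid z_t,a_t]=Q(Az_t+Ba_t)\). The first term has second moment \(\Delta_{\rm pred}\). The second term equals \(\E[f(z_{t+1})-Qz_{t+1}\mid z_t,a_t]\), again because conditioning on \(y_t\) equals conditioning on \(z_t\). By Jensen it is bounded by \(\E\|f(z_{t+1})-Qz_{t+1}\|^2\le2\eta\), using stationarity. The inequality \(\|u+v\|^2\le2\|u\|^2+2\|v\|^2\) then finishes the proof; the \(\|A\|_{\rm op}^2\) factor is just slack.

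\textbf{Main obstacle.} The delicate step is the trace inequality coupling the linear deficit to \(s\) through \(\lambda_{\min}(R_\pi)\), together with showing that cross terms between Hermite degrees vanish after conditioning on \((z_t,a_t)\) with correlated actions. I would handle this by writing \(\E[H_\alpha(z_{t+1})\mid z_t,a_t]\) as a Hermite polynomial in the mean \(m\) with covariance \(R_\pi\).
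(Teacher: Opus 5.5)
Your representation bound is correct and follows the paper's route: the Hermite energy gap $\gamma_{\mathrm{rep}}(\pi)\,s\le\Delta_{\rm enc}$, then a polar decomposition. Your explicit estimate $\|M-Q\|_F^2\le\tr S$ is a cleaner substitute for the paper's appeal to polar-decomposition stability.

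The gap is in the transition step. There you use that conditioning on $y_t=f(z_t)$ is the same as conditioning on $z_t$ because ``$f$ is invertible on its range.'' Nothing in the hypotheses makes $f=h\circ g$ injective. Only $g$ is invertible, and the encoder is constrained solely by $h(x_t)\sim\Normal(0,I_d)$. Non-injective Gaussian-preserving maps exist. For example, replacing $z_1$ by $\Phi^{-1}(2\Phi(|z_1|)-1)$, with $\Phi$ the standard normal CDF, is two-to-one and preserves $\Normal(0,I_d)$. Measure-preserving folds supported on a small set give admissible encoders with $\Delta_{\rm enc}$ as small as you like.

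In your energy step this does no harm. There you only need $J(h)\le\E\|\E[f(z_{t+1})\mid z_t,a_t]\|^2$, which follows from the tower property since $\sigma(y_t,a_t)\subseteq\sigma(z_t,a_t)$. In the transition step, however, you need an equality. The term $F_h^\star-\E[Qz_{t+1}\mid z_t,a_t]$ equals $\E[f(z_{t+1})\mid y_t,a_t]-Q(Az_t+Ba_t)$. This is not $\E[f(z_{t+1})-Qz_{t+1}\mid z_t,a_t]$ when $y_t$ loses information about $z_t$.

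What is missing is an information-loss term. Set $e:=f-Q$. Since $\xi_t$ is independent of $(z_t,a_t)$ and $a_t$ is among the conditioning variables,
\[
F_h^\star(y_t,a_t)-Q(Az_t+Ba_t)=\E[e(z_{t+1})\mid y_t,a_t]+\bigl(\E[QAz_t\mid y_t,a_t]-QAz_t\bigr).
\]
You can bound the first term exactly as you did, by Jensen and stationarity. For the second, use that conditional expectation is the $L^2$ projection onto functions of $(y_t,a_t)$, and compare with the candidate $QAQ^\top y_t$. This gives
$\E\|\E[QAz_t\mid y_t,a_t]-QAz_t\|^2\le\|A\|_{\rm op}^2\,\E\|y_t-Qz_t\|^2\le 2\|A\|_{\rm op}^2\eta$.

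This is how the paper proceeds, and it is exactly where the $\|A\|_{\rm op}^2$ factor in the statement comes from. So that factor is not slack. Your bound without it holds only under an injectivity assumption on $h$ that the theorem does not make.
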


The two errors affect different parts of the learned world model. The encoder excess risk controls deviation from an orthogonal copy of the latent state; the predictor error captures the additional cost of using \(F\) rather than the optimal predictor for that representation. The factor \(1/\gamma_{\mathrm{rep}}(\pi)\) exposes the conditioning of approximate representation identification: as the spectral gap narrows, nonlinear features become nearly as predictive as the true latent directions.

\paragraph{Proof sketch.}
The risk attained by \(F_h^\star\) is determined by the predictable energy of \(h(x_{t+1})\). Its Hermite expansion shows that the representation margin bounds the total variance in higher-order components by \(O(\Delta_{\rm enc}/\gamma_{\mathrm{rep}}(\pi))\). The remaining first-order component is close to an orthogonal map because \(h(x_t)\sim\Normal(0,I_d)\), which gives the representation bound. Combining this error with the deviation of \(F\) from \(F_h^\star\) gives the predictor bound. The full proof is provided in the corresponding appendix section.

\subsection{Counterfactual Error Amplification}
\label{subsec:counterfactual-amplification}

Theorem~\ref{thm:quantitative-identifiability} provides an in-distribution transition-error guarantee under \(P_\pi\). Planning and control, however, require predictions for actions that may be rare under this distribution, so small on-policy error need not imply reliable counterfactual prediction. We characterize this gap in the identified coordinates \(y=Qz\), where the true conditional-mean predictor is \(F^\star(y,a)=QAQ^\top y+QBa\).

For a predictor \(F\), define its on-policy excess risk relative to \(F^\star\) as
\begin{equation}
	\begin{aligned}
		\mathcal E_\pi(F)
		&:=\E_{P_\pi}\!\left[\|Qz_{t+1}-F(Qz_t,a_t)\|^2\right]\\
		&\quad-\E_{P_\pi}\!\left[\|Qz_{t+1}-F^\star(Qz_t,a_t)\|^2\right].
	\end{aligned}
	\label{eq:on-policy-excess-risk}
\end{equation}
Let \(\mu_\pi(z):=\E_\pi[a_t\mid z_t=z]\). To isolate the role of conditional coverage, consider a counterfactual distribution with the same conditional mean but identity covariance: \(a_{\rm cf}=\mu_\pi(z)+\eta_{\rm cf}\), where \(\eta_{\rm cf}\sim\Normal(0,I_m)\) is independent of \(z\). Its transition error is
\begin{equation}
	\mathcal E_{\rm cf}(F)
	:=
	\E
	\left[
	\|F(Qz,a_{\rm cf})
	-F^\star(Qz,a_{\rm cf})\|^2
	\right].
	\label{eq:counterfactual-transition-error}
\end{equation}

For this counterfactual analysis, we additionally assume that the predictor class contains
\(F^\star(y,a)+D(a-\mu_\pi(Q^\top y))\) for every matrix
\(D\in\R^{d\times m}\).

\begin{theorem}[Counterfactual error amplification]
	\label{thm:counterfactual-amplification}
	Suppose the conditions of Theorem~\ref{thm:state-transition-identifiability} and the additional predictor-class condition above hold. For every \(\delta>0\), the predictor class contains a continuous predictor \(F_\delta\) satisfying
	\[
	\mathcal E_\pi(F_\delta)=\delta,
	\qquad
	\mathcal E_{\rm cf}(F_\delta)
	=
	\frac{\delta}{\rho_{\mathrm{tr}}(\pi)}.
	\]
\end{theorem}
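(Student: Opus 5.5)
We construct \(F_\delta\) explicitly inside the assumed perturbation family, choosing the perturbation to lie along the least excited action direction. Let \(v\in\R^m\) be a unit eigenvector of \(\Sigma_{\mathrm{tr}}(\pi)\) with eigenvalue \(\rho_{\mathrm{tr}}(\pi)\), and \(u\in\R^d\) any unit vector. Set \(D=s\,uv^\top\) with a scalar \(s>0\) to be chosen, and define
\[
F_\delta(y,a):=F^\star(y,a)+D\bigl(a-\mu_\pi(Q^\top y)\bigr).
\]
By the additional predictor-class condition, \(F_\delta\) belongs to the class. Under Assumption~\ref{ass:joint-gaussian-policy}, \(\mu_\pi(z)\) is linear in \(z\), so \(F_\delta\) is affine and hence continuous.

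The first step computes the on-policy excess risk. Write \(Qz_{t+1}=F^\star(Qz_t,a_t)+Q\xi_t\). Since \(\xi_t\) is zero-mean and independent of \((z_t,a_t)\), expanding the square gives
\[
\mathcal E_\pi(F)=\E\|F(Qz_t,a_t)-F^\star(Qz_t,a_t)\|^2 .
\]
The cross term vanishes by this independence. For \(F_\delta\) this equals
\[
\E\|D(a_t-\mu_\pi(z_t))\|^2=\tr\bigl(D\,\E[\Cov_\pi(a_t\mid z_t)]\,D^\top\bigr)=\tr\bigl(D\Sigma_{\mathrm{tr}}(\pi)D^\top\bigr),
\]
where we use the tower property together with \(\E[a_t-\mu_\pi(z_t)\mid z_t]=0\). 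With \(D=s\,uv^\top\), this reduces to \(s^2\,v^\top\Sigma_{\mathrm{tr}}(\pi)v=s^2\rho_{\mathrm{tr}}(\pi)\).

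The second step computes the counterfactual error. Here \(a_{\rm cf}-\mu_\pi(z)=\eta_{\rm cf}\sim\Normal(0,I_m)\), so
\[
\mathcal E_{\rm cf}(F_\delta)=\E\|D\eta_{\rm cf}\|^2=\tr(DD^\top)=s^2 .
\]
Choosing \(s^2=\delta/\rho_{\mathrm{tr}}(\pi)\), which is well defined because \(\rho_{\mathrm{tr}}(\pi)>0\), yields \(\mathcal E_\pi(F_\delta)=\delta\) and \(\mathcal E_{\rm cf}(F_\delta)=\delta/\rho_{\mathrm{tr}}(\pi)\), as claimed.

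The main obstacle is a bookkeeping point rather than a technical one: we must make sure the cross term in \(\mathcal E_\pi\) truly vanishes. This requires \(F^\star\) to be exactly the conditional mean \(Q(Az+Ba)\) of \(Qz_{t+1}\) given \((z_t,a_t)\), which follows from Assumption~\ref{ass:controlled-transition}. We must also evaluate \(\mu_\pi(Q^\top y)\) at \(y=Qz_t\), which returns \(\mu_\pi(z_t)\) because \(Q\) is orthogonal. Everything else is a direct second-moment computation.
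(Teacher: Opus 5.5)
Your proposal is correct and follows essentially the same route as the paper's proof: you use a rank-one perturbation $D=\sqrt{\delta/\rho_{\mathrm{tr}}(\pi)}\,uv^\top$ along the least-excited eigenvector of $\Sigma_{\mathrm{tr}}(\pi)$, reduce $\mathcal E_\pi$ to $\tr\bigl(D\Sigma_{\mathrm{tr}}(\pi)D^\top\bigr)$ via conditional-mean orthogonality, and evaluate the counterfactual error as $\tr(DD^\top)$. Your points about continuity (from linearity of $\mu_\pi$) and about $\mu_\pi(Q^\top Qz_t)=\mu_\pi(z_t)$ match the paper's argument as well.
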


The theorem shows that exact transition identifiability does not imply well-conditioned approximate identification. When \(\rho_{\mathrm{tr}}(\pi)>0\), every action direction has nonzero conditional variation, so the controlled transition is identified at a global optimum. However, when \(\rho_{\mathrm{tr}}(\pi)\) is small, the behavior policy provides little variation along the least-excited action direction, and the training objective weakly penalizes errors in the corresponding action response. The constructed rank-one perturbation exploits precisely this direction: it incurs on-policy excess risk \(\delta\), while its counterfactual error is \(\delta/\rho_{\mathrm{tr}}(\pi)\). Thus, \(\rho_{\mathrm{tr}}(\pi)\) acts as a conditioning parameter for approximate transition identification. This construction provides an attainable lower bound on counterfactual error amplification, rather than a uniform upper bound over the predictor class.

\paragraph{Proof sketch.}
The conditional-mean orthogonality identity rewrites the on-policy excess risk as the mean squared deviation from \(F^\star\). Let \(v_{\min}\) be a unit eigenvector of \(\Sigma_{\mathrm{tr}}(\pi)\) with eigenvalue \(\rho_{\mathrm{tr}}(\pi)\), and perturb \(F^\star\) in this action direction by a rank-one map scaled by \(\sqrt{\delta/\rho_{\mathrm{tr}}(\pi)}\). Under the behavior policy, the residual action has covariance \(\Sigma_{\mathrm{tr}}(\pi)\), so the perturbation contributes exactly \(\delta\); under the counterfactual distribution its covariance is \(I_m\), giving error \(\delta/\rho_{\mathrm{tr}}(\pi)\). The full proof is provided in the Appendix \ref{app:proof-counterfactual-amplification}.

At \(\rho_{\mathrm{tr}}(\pi)=0\), some action direction has no residual variation once the state is fixed. Predictors may then agree exactly on the behavior distribution while disagreeing under counterfactual actions, making the controlled transition structurally non-identifiable outside the behavior support.

For the following state-dependent Gaussian policy, we show that the transition margin reduces directly to the exploration-noise variance.
\begin{corollary}[Exploration-noise scaling]
	\label{cor:exploration-noise-scaling}
	Consider the behavior-policy family
	\(a_t=\sqrt{1-\sigma^2}\,Kz_t+\sigma\eta_t\), where
	\(\eta_t\sim\Normal(0,I_m)\), \(\eta_t\perp z_t\), and
	\(\sigma\in[0,1]\). Then
	\[
	\Sigma_{\mathrm{tr}}(\pi_\sigma)=\sigma^2I_m,
	\qquad
	\rho_{\mathrm{tr}}(\pi_\sigma)=\sigma^2.
	\]
	Consequently, for every \(\sigma>0\) and \(\delta>0\), the predictor class contains a continuous predictor \(F_\delta\) satisfying
	\(\mathcal E_{\pi_\sigma}(F_\delta)=\delta\) and
	\(\mathcal E_{\rm cf}(F_\delta)=\delta/\sigma^2\).
	Thus the construction attains a counterfactual amplification factor of \(1/\sigma^2\). At
	\(\sigma=0\), the behavior policy is deterministic given the state, and the controlled transition is not identifiable outside its support.
\end{corollary}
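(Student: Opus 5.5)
The plan is to compute the conditional action covariance of the policy family directly, then invoke Theorem~\ref{thm:counterfactual-amplification}, and treat \(\sigma=0\) with an explicit pair of predictors.

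\textbf{Step 1: the transition margin.} Under \(a_t=\sqrt{1-\sigma^2}\,Kz_t+\sigma\eta_t\) with \(\eta_t\sim\Normal(0,I_m)\) independent of \(z_t\), conditioning on \(z_t=z\) fixes the first term. Hence
\[
a_t\mid z_t=z\;\sim\;\Normal\bigl(\sqrt{1-\sigma^2}Kz,\ \sigma^2 I_m\bigr).
\]
The conditional covariance \(\sigma^2 I_m\) does not depend on \(z\), so averaging over \(z_t\) gives \(\Sigma_{\mathrm{tr}}(\pi_\sigma)=\sigma^2I_m\). Every eigenvalue equals \(\sigma^2\), so \(\rho_{\mathrm{tr}}(\pi_\sigma)=\sigma^2\). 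This also gives \(\mu_{\pi_\sigma}(z)=\sqrt{1-\sigma^2}Kz\), which is the centering map that appears in the predictor-class condition.

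\textbf{Step 2: the amplification statement for \(\sigma>0\).} We check that the hypotheses of Theorem~\ref{thm:counterfactual-amplification} hold.

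The pair \((z_t,a_t)\) is a linear image of the independent Gaussians \((z_t,\eta_t)\). It is therefore centered and jointly Gaussian, which gives Assumption~\ref{ass:joint-gaussian-policy}.

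Stationarity in Assumption~\ref{ass:controlled-transition} concerns the dynamics, which we take as given. It requires \(R_{\pi_\sigma}\preceq I_d\) with noise covariance \(I_d-R_{\pi_\sigma}\). We take this, together with \(\gamma_{\mathrm{rep}}(\pi_\sigma)>0\), as part of "the conditions of Theorem~\ref{thm:state-transition-identifiability}", which the corollary implicitly presumes.

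Step~1 gives \(\rho_{\mathrm{tr}}(\pi_\sigma)=\sigma^2>0\). Applying the theorem yields a continuous \(F_\delta\) with \(\mathcal E_{\pi_\sigma}(F_\delta)=\delta\) and \(\mathcal E_{\rm cf}(F_\delta)=\delta/\sigma^2\). The amplification factor is therefore \(1/\sigma^2\).

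As a sanity check, \(F_\delta\) can be written explicitly:
\[
F_\delta(y,a)=F^\star(y,a)+\sqrt{\delta}\,\sigma^{-1}\,u\,v^\top\bigl(a-\mu_{\pi_\sigma}(Q^\top y)\bigr),
\]
with \(u\) a unit vector in \(\R^d\) and \(v\) any unit vector in \(\R^m\); every \(v\) is a minimal eigenvector here. On-policy the residual \(a-\mu_{\pi_\sigma}(Q^\top y)\) equals \(\sigma\eta_t\), so the perturbation has second moment \(\delta\). Under the counterfactual distribution the residual is \(\eta_{\rm cf}\), giving second moment \(\delta/\sigma^2\).

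\textbf{Step 3: the case \(\sigma=0\).} Here \(a_t=Kz_t\) almost surely, so the support of \(P_\pi\) is the graph \(\{(z,Kz)\}\). For any nonzero \(D\in\R^{d\times m}\), compare \(F^\star\) with
\[
F_D(y,a)=F^\star(y,a)+D\bigl(a-KQ^\top y\bigr).
\]
This \(F_D\) lies in the predictor class and is continuous. It coincides with \(F^\star\) on the support, so it attains the same population loss and is also a global minimizer. Yet it differs from \(F^\star\) at any \((y,a)\) with \(D(a-KQ^\top y)\neq0\). The transition is therefore not identifiable off the support.

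\textbf{Main obstacle.} The delicate point is bookkeeping rather than any inequality: Theorem~\ref{thm:counterfactual-amplification} formally requires the hypotheses of Theorem~\ref{thm:state-transition-identifiability}. For \(\sigma>0\) one must state that \(\gamma_{\mathrm{rep}}>0\) and stationarity are assumed. For \(\sigma=0\) the theorem cannot be invoked at all, and the direct construction in Step~3 is needed instead.
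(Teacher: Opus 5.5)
Your proposal is correct and follows essentially the same route as the paper: you compute \(\Cov(a_t\mid z_t)=\sigma^2 I_m\) directly and apply Theorem~\ref{thm:counterfactual-amplification} for \(\sigma>0\). At \(\sigma=0\) you exhibit a continuous perturbation \(F^\star(y,a)+D\bigl(a-\mu_{\pi_0}(Q^\top y)\bigr)\) that vanishes on the support \(a=Kz\); the paper does the same with \(D=uv^\top\) and records \(\mathcal E_{\rm cf}(\widetilde F)=1\), whereas you argue pointwise disagreement off the support for a general nonzero \(D\).
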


Thus, any positive exploration noise gives full conditional action support, yet transition identification becomes increasingly ill-conditioned as \(\sigma\) decreases. The attainable factor in this construction grows as \(1/\sigma^2\) and diverges in the deterministic-policy limit.

	\section{Experiments}
\label{sec:experiments}

The experiments test three implications of the theory through controlled comparisons. The representation-margin sweep varies spectral separation while maintaining non-degenerate conditional action excitation. The behavior-policy sweep varies conditional action excitation while keeping the representation regime fixed. The planning evaluation then measures how counterfactual transition errors affect downstream action selection. Together, these experiments distinguish the roles of \(\gamma_{\rm rep}\) and \(\rho_{\rm tr}\) and evaluate their consequences for control.

\subsection{Representation Identifiability}
\label{subsec:experimental-identification}

To examine how spectral separation affects representation identifiability, actions are sampled independently from a standard Gaussian, fixing \(\rho_{\rm tr}=1\), while the predictable-signal spectrum is varied across the sufficient-condition boundary \(\gamma_{\rm rep}=0\). Conditional action excitation therefore remains non-degenerate throughout, so changes in identification quality can be attributed to the representation regime. The sweep is repeated for all four observation maps.

Spectral separation consistently improves identification across observation maps. In the identifiable regime, the encoder inverts each nonlinear observation map and identifies the latent polar structure up to an approximately orthogonal transformation (Figure~\ref{fig:cwm-representation-geometry}). As shown in Figure~\ref{fig:cwm-identification-curves}, both representation and controlled transition errors decrease as the weakest predictable direction strengthens. By fixing the action excitation, this joint improvement supports the predicted role of spectral separation in identifying the latent coordinates and the dynamics expressed in those coordinates. Notably, the estimation error changes smoothly through \(\gamma_{\rm rep}=0\), indicating that the theorem is providing only a sufficient condition.

\begin{figure}[htbp]
	\centering
	\includegraphics[width=\linewidth]{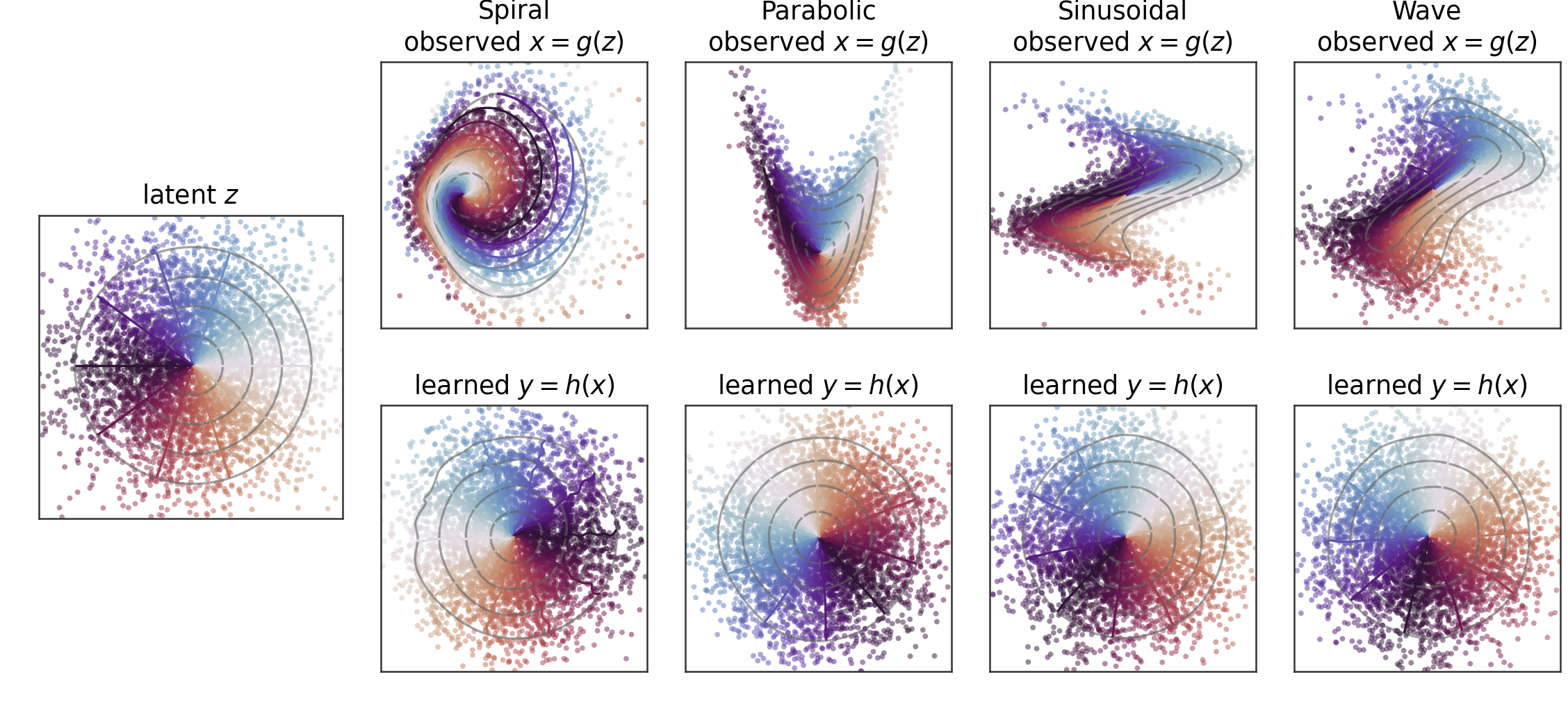}
	\caption{Representation geometry across nonlinear observation maps. The left panel shows the latent state \(z\); each remaining column shows the observation \(x=g(z)\) above and the learned representation \(y=h(x)\) below. Shared colors and polar calibration curves track corresponding latent locations.}
	\label{fig:cwm-representation-geometry}
\end{figure}

\begin{figure}[htbp]
	\centering
	\includegraphics[width=0.49\linewidth]{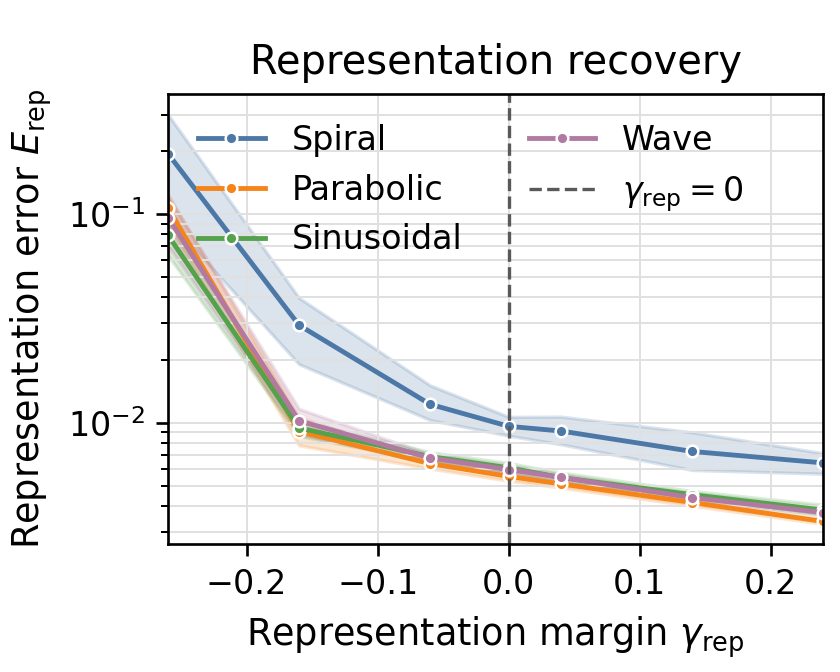}
	\hfill
	\includegraphics[width=0.49\linewidth]{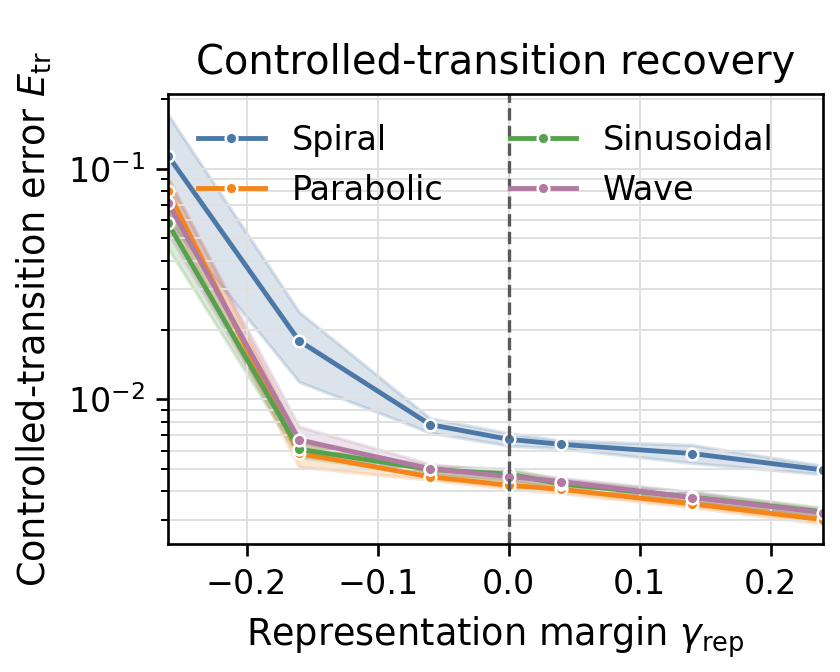}
	\caption{Identification performance as the representation margin \(\gamma_{\rm rep}\) increases. Left: normalized representation error \(E_{\rm rep}\); right: normalized controlled-transition error \(E_{\rm tr}\). Lines and shading show means and 95\% confidence intervals over five runs.}
	\label{fig:cwm-identification-curves}
\end{figure}

\subsection{Transition Identifiability and Counterfactual Prediction}
\label{subsec:behavior-coverage}
The transition experiment keeps the representation regime and controlled conditional mean fixed while varying only the conditional action excitation induced by the behavior policy. The policy family preserves a standard-Gaussian action marginal while satisfying
\begin{equation}
	\Cov(a_t\mid z_t)=\sigma^2I,
	\qquad
	\rho_{\rm tr}(\pi_\sigma)=\sigma^2.
	\label{eq:coverage-transition-margin}
\end{equation}
Thus, \(\sigma\) changes conditional coverage without changing the marginal action scale. At \(\sigma=0\), the data constrain only the closed-loop map \(A+BK\), while positive \(\sigma\) restores full conditional support, though identification remains poorly conditioned under weak excitation. The representation margin stays positive throughout. We compare behavior-action error with error under a broader intervention distribution and separately estimate the state and action components of the learned transition. Unlike Theorem~\ref{thm:counterfactual-amplification}, the experimental counterfactual distribution is fixed across coverage levels, so the experiment tests the predicted amplification trend rather than the exact theoretical factor. Full details are given in the supplemental materials.

Conditional action coverage determines whether behavioral accuracy extends to alternative actions. At \(\sigma=0\), prediction remains accurate on behavior actions, while counterfactual error is large and the state and action components are not separately identifiable (Figures~\ref{fig:cwm-behavior-coverage} and~\ref{fig:cwm-quantitative-summary}). Increasing conditional excitation improves counterfactual prediction and both transition components. Since representation error remains low and \(\gamma_{\rm rep}>0\) throughout, these improvements are attributable to transition excitation rather than representation quality. Thus, action conditioning and low behavioral prediction error alone do not guarantee reliable responses to alternative controls.

\begin{figure}[htbp]
	\centering
	\includegraphics[width=\linewidth]{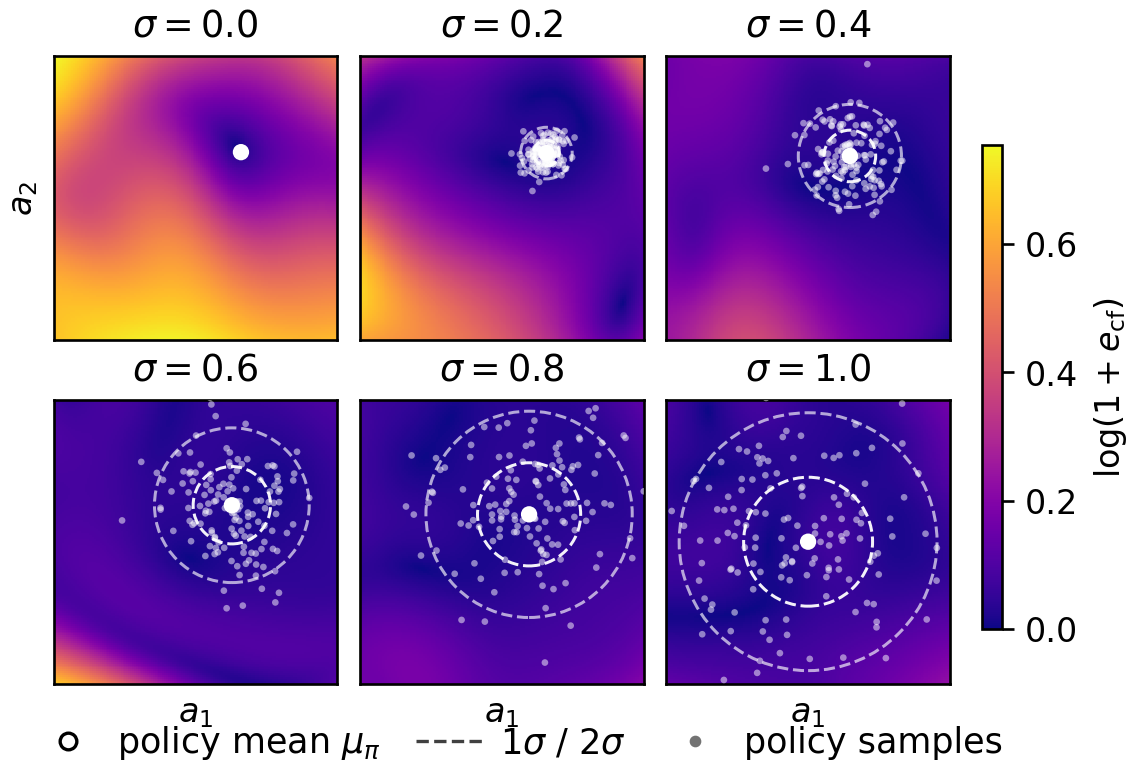}
	\caption{Counterfactual transition error across actions at a fixed state. Predicted next states are orthogonally aligned with the latent coordinates for evaluation.}
	\label{fig:cwm-behavior-coverage}
\end{figure}

\begin{figure}[htbp]
	\centering
	\includegraphics[width=\linewidth]{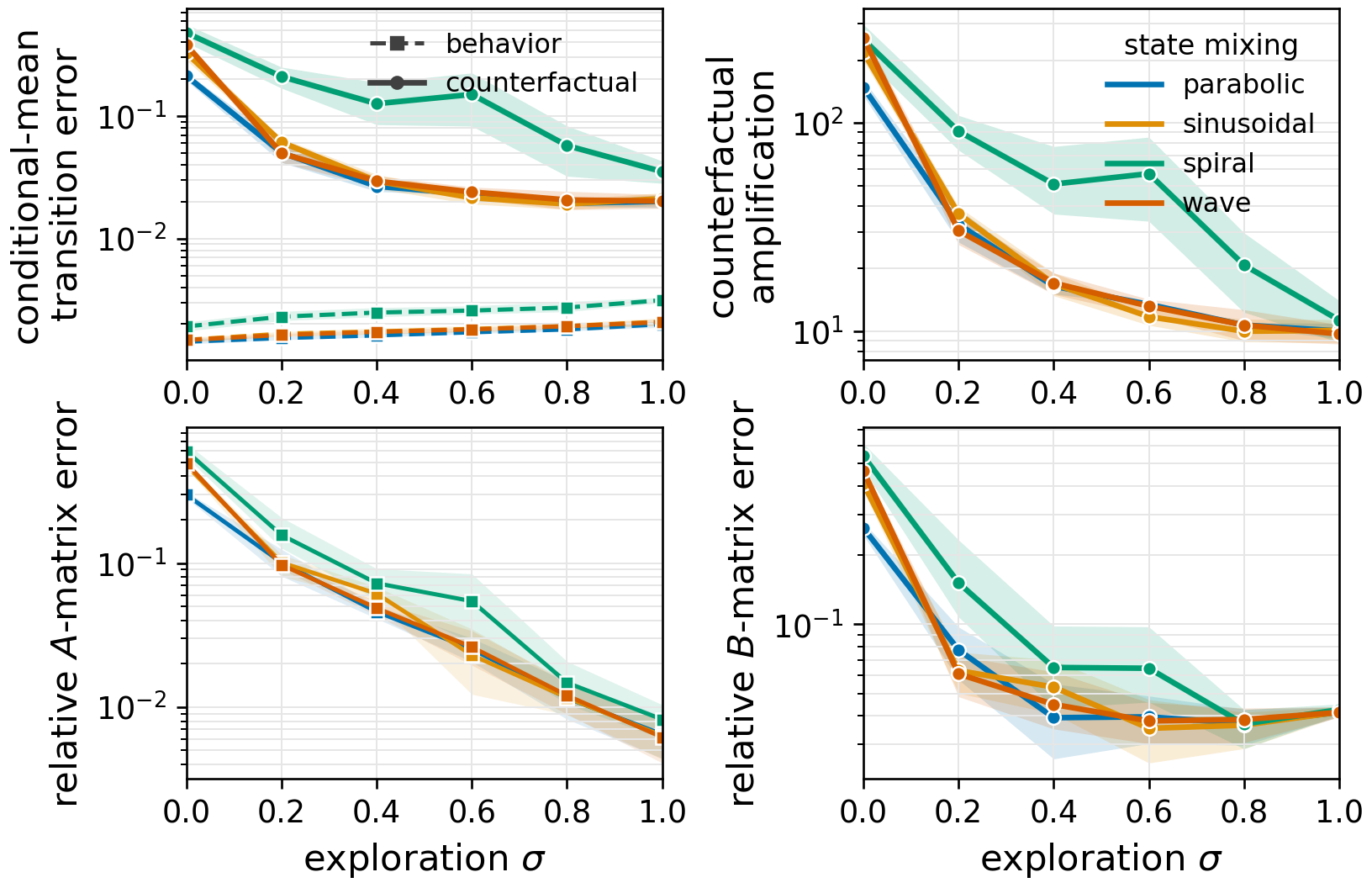}
	\caption{Transition identification as conditional action coverage increases. Top: conditional-mean transition error and empirical counterfactual amplification under a fixed intervention distribution. Bottom: relative estimation errors for the state and action components, represented by matrices \(A\) and \(B\), respectively. Lines and shading show means and 95\% confidence intervals over five runs.}
	\label{fig:cwm-quantitative-summary}
\end{figure}

\subsection{Consequences for Goal-Conditioned Planning}
\label{subsec:planning-experiment}
Counterfactual transition errors matter for planning because the planner must compare action sequences that may depart from the behavior distribution. We test this mechanism with a goal-conditioned planner that encodes the current and goal observations, rolls out a shared bank of bounded action sequences, and selects the sequence whose predicted endpoint is closest to the encoded goal. The selected sequence is then evaluated under the true conditional-mean dynamics. The candidate bank is fixed across models so that performance differences reflect the learned rollouts rather than the search distribution.

The improvement in transition identification carries over to planning performance. Under limited conditional coverage, inaccurate rollouts distort the predicted reachable set and the selected sequence can execute far from its predicted endpoint. As coverage increases, the predicted and true reachable sets become progressively better aligned (Figure~\ref{fig:cwm-planning}). This geometric improvement is reflected in action-selection performance across all observation maps: mean terminal error over a shared set of reachable goals decreases sharply and is nearly eliminated in the well-excited regime (Figure~\ref{fig:cwm-planning-curve}). Averaging over goals rules out the accidental success that can occur for an individual discrete target. These results show that counterfactual transition errors under limited conditional excitation have measurable consequences for control: they distort candidate evaluation and increase the terminal error of model-based action selection.

\begin{figure}[htbp]
	\centering
	\includegraphics[width=\linewidth]{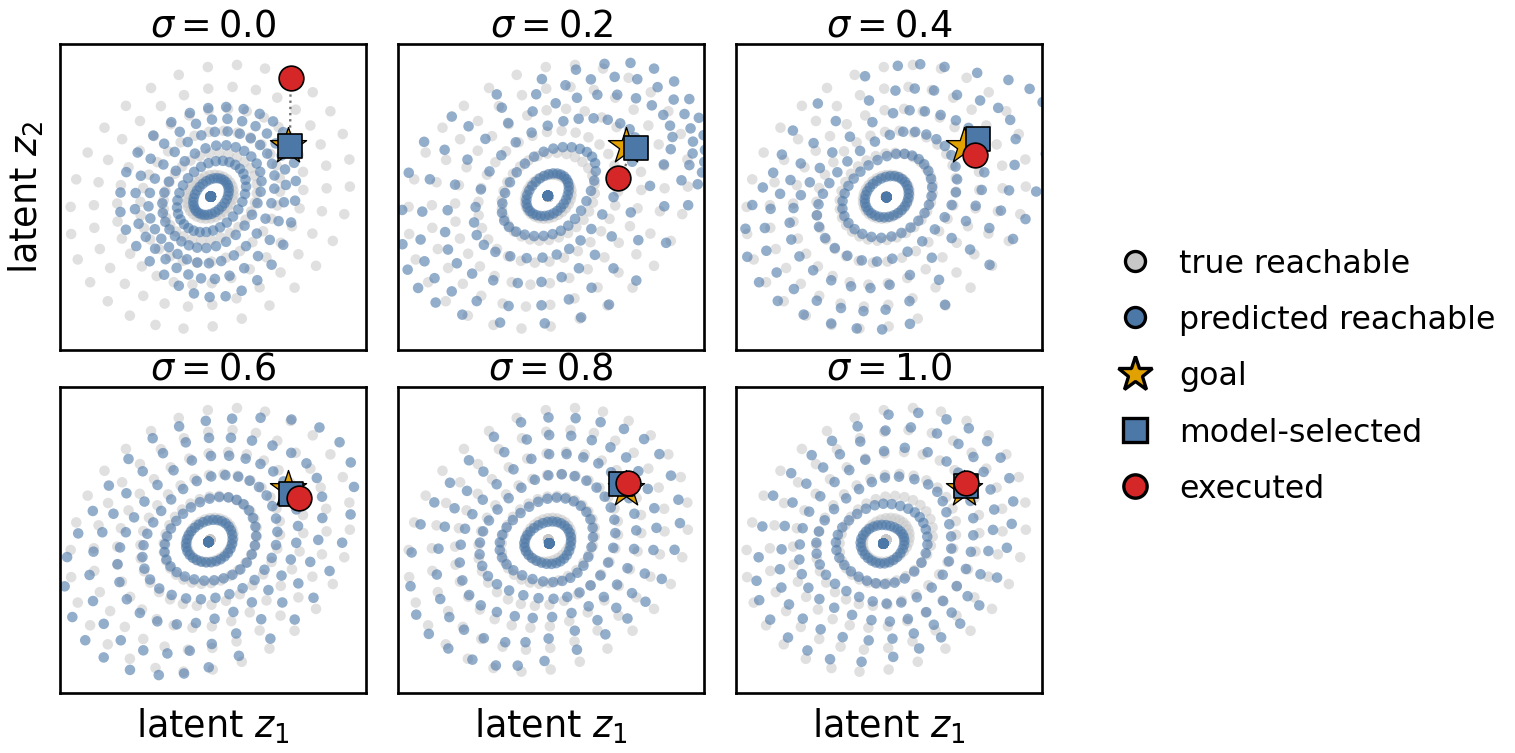}
	\caption{Predicted and true terminal sets across conditional action coverage. Predicted endpoints are orthogonally aligned with the latent coordinates for visualization. The dotted segment indicates the prediction error for the selected sequence.}
	\label{fig:cwm-planning}
\end{figure}

\begin{figure}[htbp]
	\centering
	\includegraphics[width=\linewidth]{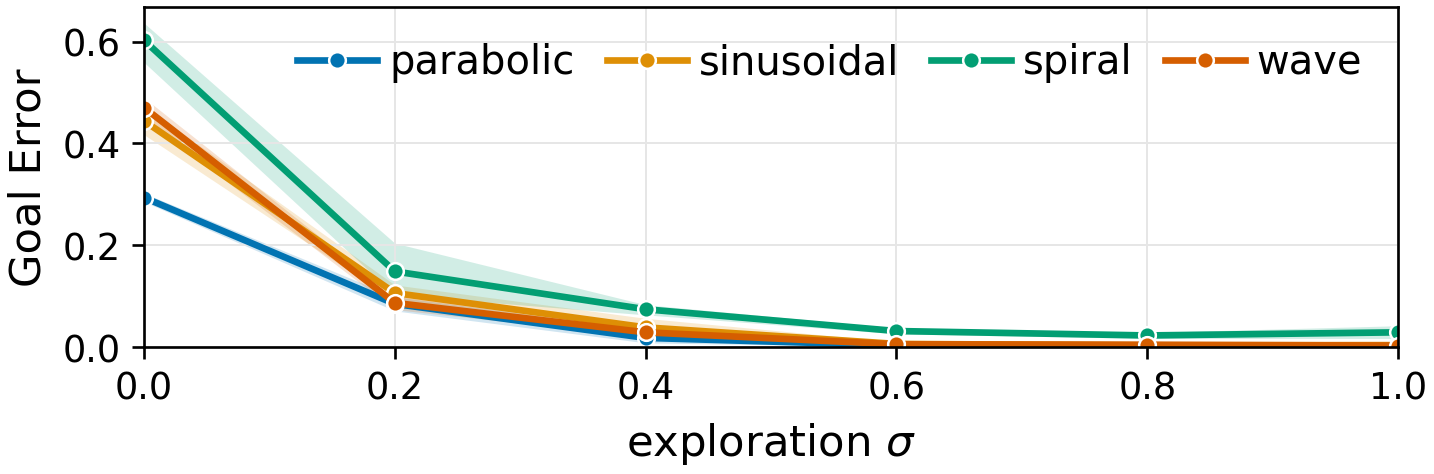}
	\caption{Goal-conditioned planning error as conditional action coverage increases. Curves report mean terminal error across goals reachable by the candidate bank; shading shows 95\% confidence intervals across five runs.}
	\label{fig:cwm-planning-curve}
\end{figure}

	\section{Conclusion}
\label{sec:conclusion}

This paper establishes a joint identifiability theory for controlled world models from nonlinear observations. We show that action-conditioned latent prediction can jointly identify the latent representation up to an orthogonal transformation and the controlled conditional-mean transition, provided that predictable-signal spectral separation and sufficient conditional action excitation hold. In contrast, state-only prediction captures the behavior-policy-averaged future and does not generally identify how alternative actions affect the next state. We further derive quantitative identifiability bounds and show that weak conditional action excitation can make transition identification poorly conditioned, allowing accurate on-policy prediction to coexist with substantially larger counterfactual error. Experiments support the distinct roles of representation and transition identifiability and demonstrate how improved transition identification leads to more reliable latent-space planning. Our analysis is limited to invertible observations, linear Gaussian latent dynamics, an exact representation constraint, and identification of the controlled conditional mean. Future work should extend these guarantees to finite-sample settings and to nonlinear or partially observed dynamics.
\newpage

	\bibliography{main}

\appendix

\section{Proof of Theorem~\ref{thm:state-transition-identifiability}}
\label{app:proof-main}

Throughout the proof, equalities between measurable functions are understood up to the relevant Gaussian measure. 
Let \(w=(z_t^\top,a_t^\top)^\top\), \(C=[A\;B]\), and \(\Sigma_w:=\Cov(w)\). Under Assumptions~1--2, \(w\sim\Normal(0,\Sigma_w)\), \(z_{t+1}=Cw+\xi_t\), and
\[
R_\pi:=\Cov\!\left(\E[z_{t+1}\mid z_t,a_t]\right)=C\Sigma_w C^\top.
\]
The independent-noise special case \(\Sigma_w=I_{d+m}\) gives \(R_\pi=CC^\top=AA^\top+BB^\top\).

\begin{lemma}[Optimal predictor and predictable energy]
\label{lem:optimal_predictor}
For any admissible encoder \(h\), the squared-loss optimal predictor is
\(F_h^\star(h(z_t),a_t)=\E[h(z_{t+1})\mid h(z_t),a_t]\). 
Moreover, minimizing the prediction loss over \(F\) is equivalent to maximizing
\[
	J(h):=\E\!\left[\left\|\E[h(z_{t+1})\mid h(z_t),a_t]\right\|^2\right].
\]
\end{lemma}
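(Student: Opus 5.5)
The plan is to fix the encoder \(h\) and treat the inner minimization over \(F\) as a standard \(L^2\) projection problem. Write \(Y:=h(x_{t+1})\) and \(S:=(h(x_t),a_t)\); here \(h(z_t)\) is shorthand for \(h(g(z_t))\), which is legitimate because \(g\) is invertible. By Assumption~\ref{ass:predictor-richness}, the predictor class contains \(F_h^\star(S)=\E[Y\mid S]\). For any measurable \(F\) with finite second moment, add and subtract \(F_h^\star\) inside the loss:
\[
\E\|Y-F(S)\|^2=\E\|Y-F_h^\star(S)\|^2+\E\|F_h^\star(S)-F(S)\|^2+2\,\E\bigl\langle Y-F_h^\star(S),\,F_h^\star(S)-F(S)\bigr\rangle .
\]
To remove the cross term, condition on \(S\) using the tower property. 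The factor \(F_h^\star(S)-F(S)\) is \(\sigma(S)\)-measurable, and \(\E[Y-F_h^\star(S)\mid S]=0\), so the cross term is zero. The second term is nonnegative, which shows that \(F_h^\star\) is optimal. It is also the unique optimum up to \(P_\pi\)-null sets, which will matter later for the almost-sure statements.

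The next step converts the optimal risk into the functional \(J(h)\). Apply the same orthogonality with \(F\equiv 0\), or equivalently the law of total variance:
\[
\E\|Y-F_h^\star(S)\|^2=\E\|Y\|^2-\E\|F_h^\star(S)\|^2 .
\]
The key input is the representation constraint together with stationarity (Assumption~\ref{ass:controlled-transition}). Together they give \(h(x_{t+1})\sim\Normal(0,I_d)\), since \(x_{t+1}=g(z_{t+1})\) has the same law as \(x_t\). Hence \(\E\|Y\|^2=\tr(I_d)=d\) for every admissible \(h\), and
\[
\min_F \mathcal L_\pi(h,F)=\mathcal L_\pi(h,F_h^\star)=d-J(h).
\]
Therefore minimizing \(\mathcal L_\pi\) jointly over \((h,F)\) is equivalent to maximizing \(J\) over admissible encoders, with \(F=F_h^\star\) almost surely at any minimizer.

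The only delicate point is integrability. Every admissible \(Y\) is Gaussian, so \(Y\in L^2\). Any \(F\) with infinite risk is irrelevant to the minimization, and the decomposition is applied only when \(\E\|F(S)\|^2<\infty\). I do not expect any real obstacle here. The substantive work, namely maximizing \(J\) through the Hermite expansion, belongs to the subsequent steps of the proof of Theorem~\ref{thm:state-transition-identifiability} and is not part of this lemma.
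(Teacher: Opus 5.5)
Your proposal is correct and follows the same route as the paper. The paper likewise uses the conditional-mean (Pythagorean) identity to get $\inf_F\mathcal L(h,F)=\E\|h(z_{t+1})\|^2-J(h)$, then uses $\E\|h(z_{t+1})\|^2=d$ from the Gaussian constraint and stationarity; you additionally spell out the cross-term cancellation, uniqueness of the optimal predictor up to null sets, and integrability, which the paper leaves implicit.
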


\begin{proof}
The first claim follows from the conditional-mean optimality of squared loss. 
By the Pythagorean identity for conditional expectation,
\[
	\inf_F \mathcal L(h,F)
	=
	\E\|h(z_{t+1})\|^2-J(h).
\]
Since \(h\) is standard Gaussian and the process is stationary, \(\E\|h(z_{t+1})\|^2=d\).
Thus minimizing prediction loss is equivalent to maximizing \(J(h)\).
\end{proof}

\begin{lemma}[Controlled Hermite energy bound]
\label{lem:energy_bound}
Let \(h(z)=\sum_{k\ge1}h_k(z)\) be the vector-valued Hermite decomposition under \(z\sim\Normal(0,I_d)\), and write the linear component as \(h_1(z)=Lz\). 
Let \(\tau_1:=\E\|h_1(z)\|^2=\tr(LL^\top)\). 
If the eigenvalues of \(R_\pi\) are \(1>\lambda_1\ge\cdots\ge\lambda_d>0\), then every admissible \(h\) satisfies
\[
	J(h)
	\le
	\tr(R_\pi)
	-
	(\lambda_d-\lambda_1^2)(d-\tau_1).
\]
\end{lemma}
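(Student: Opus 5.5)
The plan is to bound $J(h)$ in three moves: enlarge the conditioning, rewrite the resulting energy as a correlation between two Gaussian copies, and split that correlation by Hermite degree. Throughout, $h$ stands for $h\circ g$ as a function of the latent state, as in Lemma~\ref{lem:energy_bound}.

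\emph{Step 1: enlarge the conditioning.} Since $h(z_t)$ is a function of $z_t$, the $\sigma$-algebra generated by $(h(z_t),a_t)$ is contained in that generated by $w=(z_t,a_t)$. The tower property and Jensen's inequality then give
\[
J(h)\le \E\bigl\|\E[h(z_{t+1})\mid w]\bigr\|^2 .
\]
Write $z_{t+1}=m+\xi_t$ with $m=Cw$, where $\Cov(m)=R_\pi$ and $\Cov(\xi_t)=I-R_\pi$. Because $\xi_t$ is independent of $w$, we have $\E[h(z_{t+1})\mid w]=\E[h(z_{t+1})\mid m]$.

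\emph{Step 2: write the energy as a two-copy correlation.} Let $z',z''$ be two copies of $z_{t+1}$ that are conditionally independent given $m$. Then
\[
\E\bigl\|\E[h(z')\mid m]\bigr\|^2=\E[h(z')^\top h(z'')].
\]
The pair $(z',z'')$ is centered Gaussian with $z',z''\sim\Normal(0,I)$ and cross-covariance $\Cov(m)=R_\pi$. Since $0\preceq R_\pi\preceq I$, this pair has the same law as $z''=R_\pi z'+(I-R_\pi^2)^{1/2}\zeta$ with $\zeta\sim\Normal(0,I)$ independent of $z'$. Hence
\[
\E[h(z')^\top h(z'')]=\E\bigl[h(z')^\top (P_{R}h)(z')\bigr],
\]
where $P_R$ is the generalized Ornstein--Uhlenbeck (Mehler) operator with correlation matrix $R_\pi$.

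\emph{Step 3: decompose by Hermite degree.} The key fact is that $P_R$ preserves each Hermite chaos: on the degree-$k$ component it acts as the symmetric tensor power $R_\pi^{\otimes k}$ on the coefficient tensor. Consequently cross-degree terms vanish, and the degree-$k$ contribution is at most $\|R_\pi\|_{\rm op}^k\,\tau_k=\lambda_1^k\tau_k$, where $\tau_k=\E\|h_k\|^2$. For $k\ge2$ we have $\lambda_1^k\le\lambda_1^2$ because $\lambda_1<1$, so the higher-order part contributes at most $\lambda_1^2\sum_{k\ge2}\tau_k=\lambda_1^2(d-\tau_1)$. Here $\sum_k\tau_k=\E\|h\|^2=d$ by the standard-Gaussian constraint. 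The degree-one part is exactly
\[
\E[(Lz')^\top L z'']=\tr(LR_\pi L^\top)=\tr(R_\pi M),\qquad M:=L^\top L .
\]

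\emph{Step 4: bound the linear term.} Orthogonality of the chaoses gives $I=\Cov(h)=\sum_k\Cov(h_k)\succeq LL^\top$. Hence $\|L\|_{\rm op}\le1$, so $0\preceq M\preceq I$ and $\tr M=\tau_1$. Since $I-M\succeq0$,
\[
\tr(R_\pi M)=\tr R_\pi-\tr\bigl(R_\pi(I-M)\bigr)\le \tr R_\pi-\lambda_d(d-\tau_1).
\]
Adding the higher-order bound $\lambda_1^2(d-\tau_1)$ yields
\[
J(h)\le\tr(R_\pi)-(\lambda_d-\lambda_1^2)(d-\tau_1),
\]
which is the claim.

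The main obstacle is Step 3: justifying that the non-isotropic Mehler operator acts on the degree-$k$ chaos as $R_\pi^{\otimes k}$, with operator norm $\lambda_1^k$. The first thing to try is diagonalizing $R_\pi=U\Lambda U^\top$. Rotating coordinates by $U$ preserves both $\Normal(0,I)$ and the degree of each Hermite polynomial. In the rotated coordinates the operator factorizes across coordinates into one-dimensional Ornstein--Uhlenbeck operators with correlations $\lambda_i$. Each such operator multiplies the multi-index Hermite polynomial $H_\alpha$ by $\prod_i\lambda_i^{\alpha_i}\le\lambda_1^{|\alpha|}$. This eigen-expansion gives both the vanishing of cross-degree terms and the per-degree contraction bound.
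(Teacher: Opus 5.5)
Your proposal is correct and follows essentially the same route as the paper. Both arguments enlarge the conditioning to $(z_t,a_t)$, split $h$ by Hermite degree so that the linear part contributes $\tr(LR_\pi L^\top)$ and each higher-degree part is contracted by at most $\lambda_1^2$, and then bound the linear term using $LL^\top\preceq I_d$. Your two-copy rewriting turns the conditional-expectation operator $T$ of the paper's Gaussian channel into the self-adjoint Mehler operator $T^{*}T=P_{R_\pi}$ and diagonalizes it, which makes explicit the ``standard Hermite--Mehler decomposition'' that the paper invokes without proof.
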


\begin{proof}
Since \(h(z_t)\) is a function of \(z_t\), conditioning on \((z_t,a_t)\) can only increase predictable energy, so
\[
	J(h)
	\le
	\E\!\left[\left\|\E[h(z_{t+1})\mid z_t,a_t]\right\|^2\right].
\]
Write \(w=\Sigma_w^{1/2}u\) for \(u\sim\Normal(0,I_{d+m})\), and set \(M:=C\Sigma_w^{1/2}\). Then \(z_{t+1}=Mu+\xi_t\) is a Gaussian channel with \(MM^\top=R_\pi\). Its canonical correlations are therefore the square roots of the eigenvalues of \(R_\pi\). By the standard Hermite--Mehler decomposition for Gaussian channels, the first-order Hermite component is contracted according to \(R_\pi\), while every Hermite component of degree at least two has predictable energy at most \(\lambda_1^2\) times its variance.
Therefore,
\[
	J(h)
	\le
	\tr(LR_\pi L^\top)+\lambda_1^2(d-\tau_1).
\]
It remains to control the linear term. 
Whitening gives \(I_d=\E[h(z)h(z)^\top]\), and Hermite orthogonality implies \(LL^\top\preceq I_d\). 
Hence
\[
	\tr(LR_\pi L^\top)
	\le
	\tr(R_\pi)-\lambda_d(d-\tau_1).
\]
Combining the two inequalities gives the result.
\end{proof}

\begin{proof}[Proof of Theorem~\ref{thm:state-transition-identifiability}]
For any orthogonal \(Q\in O(d)\), the encoder \(h_Q(z)=Qz\) is admissible. 
Its optimal predictor is \(F_Q(h_Q(z_t),a_t)=Q(Az_t+Ba_t)\), and the corresponding predictable energy is
\[
J(h_Q)=\E\|Q(Az_t+Ba_t)\|^2=\tr(R_\pi).
\]
Thus the maximum value of \(J(h)\) is at least \(\tr(R_\pi)\).

By Lemma~\ref{lem:energy_bound}, every admissible encoder satisfies
\[
	J(h)
	\le
	\tr(R_\pi)-(\lambda_d-\lambda_1^2)(d-\tau_1).
\]
Under the controlled spectral separation condition \(\lambda_d-\lambda_1^2>0\), any global maximizer must have \(\tau_1=d\). 
Since the total Hermite energy of \(h\) is \(d\), all higher-order Hermite components vanish, and hence \(h(z)=Lz\) a.s. 
The standard-Gaussian constraint gives \(LL^\top=I_d\), so \(L\in O(d)\).
Setting \(Q=L\) proves state identifiability.

Finally, for \(y_t=h(z_t)=Qz_t\), the optimal predictor satisfies
\[
	F^\star(y_t,a_t)
	=
	\E[Qz_{t+1}\mid Qz_t,a_t]
	=
	QAQ^\top y_t+QBa_t .
\]
This proves controlled transition identifiability. 
Moreover, \(\rho_{\rm tr}(\pi)>0\) makes the conditional covariance \(\Cov(a_t\mid z_t)\) positive definite. Since \(\Cov(z_t)=I_d\), the Schur-complement criterion implies that \(\Sigma_w\) is positive definite, so the jointly Gaussian behavior distribution has a strictly positive density on \(\mathbb R^{d+m}\). The continuous difference between \(F(Qz,a)\) and \(Q(Az+Ba)\) therefore vanishes everywhere once it vanishes. Equivalently, in identified coordinates, \(F(y,a)=QAQ^\top y+QBa\) for every \((y,a)\in\mathbb R^d\times\mathbb R^m\).
The remaining prediction loss is the irreducible noise variance, namely
\(\tr(\Cov(\xi_t))=d-\tr(R_\pi)\).
\end{proof}
\section{Proof of Theorem~\ref{thm:quantitative-identifiability}}
\label{app:proof-quantitative}

We use the notation of Theorem~\ref{thm:quantitative-identifiability}. 
Let \(R_\pi:=\Cov(Az_t+Ba_t)=C\Sigma_w C^\top\), with eigenvalues \(1>\lambda_1\ge\cdots\ge\lambda_d>0\), and let \(\gamma_{\rm rep}(\pi):=\lambda_d-\lambda_1^2\).
For an encoder satisfying \(h(z_t)\sim\Normal(0,I_d)\), write \(h(z)=Lz+g(z)\) for its Hermite decomposition and \(F_h^\star(h(z_t),a_t):=\E[h(z_{t+1})\mid h(z_t),a_t]\).
Define \(J(h):=\E\|F_h^\star(h(z_t),a_t)\|^2\).

\begin{lemma}[Approximate predictable-energy gap]
\label{lem:approx-energy-gap}
Let \(r:=\E\|g(z)\|^2\).
If \(\mathcal L(h,F_h^\star)\le \mathcal L^\star+\Delta_{\rm enc}\), then
\[
    r
    \le
    C\frac{\Delta_{\rm enc}}{\gamma_{\rm rep}(\pi)} .
\]
\end{lemma}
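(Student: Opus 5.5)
The plan is to combine Lemma~\ref{lem:optimal_predictor} with the Hermite energy bound of Lemma~\ref{lem:energy_bound}, now used quantitatively instead of only at the optimum.

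\textbf{Step 1: rewrite the excess risk as an energy deficit.} For an admissible encoder, Lemma~\ref{lem:optimal_predictor} gives
\[
\mathcal L(h,F_h^\star)=d-J(h),
\]
because \(\E\|h(z_{t+1})\|^2=d\) under stationarity and the Gaussian constraint. The proof of Theorem~\ref{thm:state-transition-identifiability} shows \(\mathcal L^\star=d-\tr(R_\pi)\). The hypothesis \(\mathcal L(h,F_h^\star)\le\mathcal L^\star+\Delta_{\rm enc}\) is therefore equivalent to
\[
J(h)\ge \tr(R_\pi)-\Delta_{\rm enc}.
\]

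\textbf{Step 2: identify \(r\) with the nonlinear Hermite energy.} Hermite components of different degrees are orthogonal in \(L^2(\Normal(0,I_d))\), and \(\E\|h(z)\|^2=d\). Hence
\[
d=\tau_1+r,\qquad \tau_1=\tr(LL^\top),
\]
so \(r=d-\tau_1\) is exactly the quantity penalized in Lemma~\ref{lem:energy_bound}. That lemma gives
\[
J(h)\le \tr(R_\pi)-(\lambda_d-\lambda_1^2)\,r=\tr(R_\pi)-\gamma_{\rm rep}(\pi)\,r.
\]

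\textbf{Step 3: combine.} Putting the two bounds on \(J(h)\) together,
\[
\tr(R_\pi)-\Delta_{\rm enc}\le J(h)\le \tr(R_\pi)-\gamma_{\rm rep}(\pi)\,r .
\]
This yields \(\gamma_{\rm rep}(\pi)\,r\le\Delta_{\rm enc}\), i.e.\ \(r\le\Delta_{\rm enc}/\gamma_{\rm rep}(\pi)\). The claim then holds with \(C=1\), and any universal \(C\ge1\) suffices.

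\textbf{Main obstacle.} The substantive input is Lemma~\ref{lem:energy_bound}, specifically the claim that every Hermite component of degree at least two has conditional predictable energy at most \(\lambda_1^2\) times its variance. The argument uses it only as an inequality, so nothing further is needed beyond what is already stated. Two minor checks remain. First, Lemma~\ref{lem:energy_bound} must apply to every admissible \(h\) and not only to maximizers; its proof makes no optimality assumption, so this holds. Second, the lemma assumes \(\lambda_d>0\), which follows from \(\gamma_{\rm rep}(\pi)>0\). No smallness condition such as \(\Delta_{\rm enc}\le c\gamma_{\rm rep}(\pi)\) is needed here; that condition will matter only later, when \(L\) is shown to be close to an orthogonal matrix.
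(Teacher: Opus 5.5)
Your proposal is correct and follows the paper's proof. You rewrite the excess risk as \(J(h)\ge\tr(R_\pi)-\Delta_{\rm enc}\), then combine it with the Hermite bound \(J(h)\le\tr(LR_\pi L^\top)+\lambda_1^2 r\le\tr(R_\pi)-\lambda_d r+\lambda_1^2 r\), which is exactly Lemma~\ref{lem:energy_bound} once \(r=d-\tau_1\). This gives \(\gamma_{\rm rep}(\pi)\,r\le\Delta_{\rm enc}\), i.e.\ \(C=1\) with no smallness condition needed.
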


\begin{proof}
Since \(h(z_t)\sim\Normal(0,I_d)\), \(\mathcal L(h,F_h^\star)=d-J(h)\). The excess-risk assumption therefore gives \(J(h)\ge \tr(R_\pi)-\Delta_{\rm enc}\).

On the other hand, the controlled Hermite contraction used in the exact proof gives
\[
    J(h)
    \le
    \tr(LR_\pi L^\top)+\lambda_1^2 r .
\]
\[
    \tr(LR_\pi L^\top)
    \le
    \tr(R_\pi)-\lambda_d r.
\]
Combining the two displays yields
\((\lambda_d-\lambda_1^2)r\le \Delta_{\rm enc}\), which proves the claim.
\end{proof}

\begin{lemma}[Approximate orthogonal identification]
\label{lem:approx-orthogonal-identification}
Under the assumptions of Lemma~\ref{lem:approx-energy-gap}, there exists \(Q\in O(d)\) such that
\[
    \E\|h(z_t)-Qz_t\|^2
    \le
    C\frac{\Delta_{\rm enc}}{\gamma_{\rm rep}(\pi)} .
\]
\end{lemma}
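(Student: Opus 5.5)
Our plan is to convert the higher-order energy bound of Lemma~\ref{lem:approx-energy-gap} into a bound on the distance of \(L\) from \(O(d)\), using only the whitening constraint, and then pick \(Q\) as the orthogonal polar factor of \(L\).

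First, by Hermite orthogonality the linear part \(Lz\) and the remainder \(g(z)\) are uncorrelated. The constraint \(\E[h(z)h(z)^\top]=I_d\) then gives
\[
LL^\top+\E[g(z)g(z)^\top]=I_d .
\]
Set \(G:=\E[g(z)g(z)^\top]\succeq0\). Then \(\tr(G)=r\), and therefore \(\|I_d-LL^\top\|_*=r\), where \(\|\cdot\|_*\) is the trace norm. In particular \(LL^\top\preceq I_d\), so every singular value \(s_i\) of \(L\) lies in \([0,1]\), and
\[
\sum_i(1-s_i^2)=r .
\]

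Next, we take the singular value decomposition \(L=U\operatorname{diag}(s)V^\top\) and choose \(Q:=UV^\top\in O(d)\). Using \(z\sim\Normal(0,I_d)\) and the orthogonality of \(g\) to linear functions,
\[
\E\|h(z)-Qz\|^2=\|L-Q\|_F^2+r=\sum_i(1-s_i)^2+r .
\]
Because \(s_i\in[0,1]\), we have \((1-s_i)^2\le(1-s_i)(1+s_i)=1-s_i^2\). Summing over \(i\),
\[
\sum_i(1-s_i)^2\le r,
\]
so \(\E\|h(z)-Qz\|^2\le 2r\). Lemma~\ref{lem:approx-energy-gap} now yields
\[
\E\|h(z)-Qz\|^2\le 2C\,\Delta_{\rm enc}/\gamma_{\rm rep}(\pi),
\]
and we rename the constant.

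The step that needs care is the first one: the decomposition of \(\E[hh^\top]\) into \(LL^\top+G\) with no cross term. This holds because the degree-one Hermite component is the \(L^2\) projection onto linear functions and the degree-zero component is zero, since \(\E h=0\). Once \(LL^\top\preceq I_d\) is established, the singular-value inequality is elementary, so there is no genuine obstacle.

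The smallness hypothesis \(\Delta_{\rm enc}\le c\gamma_{\rm rep}\) is not needed for this lemma itself. It only ensures that the resulting bound is nontrivial, that is, at most a fixed fraction of \(d\).
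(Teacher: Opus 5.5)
Your proof is correct and follows the paper's route. Both arguments use the Hermite-orthogonality identity \(LL^\top+\Cov(g(z))=I_d\), take the orthogonal polar factor of \(L\), and split \(\E\|h(z)-Qz\|^2=\|L-Q\|_F^2+r\). The paper instead appeals to a generic stability bound for the polar decomposition from \(\|LL^\top-I_d\|_F\le r\); you verify that step directly via \(LL^\top\preceq I_d\) and \((1-s_i)^2\le 1-s_i^2\), which gives the explicit constant \(2\) without the smallness hypothesis.
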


\begin{proof}
Since \(h(z_t)\sim\Normal(0,I_d)\), Hermite orthogonality gives
\[
    I_d=LL^\top+\Cov(g(z)).
\]
Therefore
\[
    \|LL^\top-I_d\|_F
    =
    \|\Cov(g(z))\|_F
    \le
    r .
\]
By stability of the polar decomposition, there exists \(Q\in O(d)\) such that \(\|L-Q\|_F^2\le Cr\).
Using again the orthogonality between the linear and higher-order Hermite components,
\[
    \E\|h(z_t)-Qz_t\|^2
    =
    \|L-Q\|_F^2+\E\|g(z_t)\|^2
    \le
    Cr.
\]
The result follows from Lemma~\ref{lem:approx-energy-gap} and \(\gamma_{\rm rep}(\pi)\le1\).
\end{proof}

\begin{proof}[Proof of Theorem~\ref{thm:quantitative-identifiability}]
Let \(\eta:=\Delta_{\rm enc}/\gamma_{\rm rep}(\pi)\).
Lemma~\ref{lem:approx-orthogonal-identification} gives an orthogonal matrix \(Q\in O(d)\) such that \(\E\|h(z_t)-Qz_t\|^2\le C\eta\). 
This proves the state-identification bound.

It remains to prove the transition-identification bound. 
Let \(e(z):=h(z)-Qz\). 
For the optimal predictor \(F_h^\star\), we decompose
\begin{equation}
\begin{aligned}
    F_h^\star(h(z_t),a_t)
    &-Q(Az_t+Ba_t)\\
    &=\E[e(z_{t+1})\mid h(z_t),a_t]\\
    &\quad+\E[QAz_t\mid h(z_t),a_t]-QAz_t.
\end{aligned}
\end{equation}
The first term is bounded by Jensen's inequality and stationarity:
\[
    \E\|\E[e(z_{t+1})\mid h(z_t),a_t]\|^2
    \le
    \E\|e(z_{t+1})\|^2
    =
    \E\|e(z_t)\|^2 .
\]
For the second term, conditional expectation is the best \(L^2\) approximation by functions of \((h(z_t),a_t)\). 
Using \(QAQ^\top h(z_t)\) as a candidate, and writing
\(r_t:=\E[QAz_t\mid h(z_t),a_t]-QAz_t\), gives
\begin{equation}
\begin{aligned}
    \E\|r_t\|^2
    &\le \|A\|_{\rm op}^2
    \E\!\left\|h(z_t)-Qz_t\right\|^2 .
\end{aligned}
\end{equation}
Thus
\[
    \E\|F_h^\star(h(z_t),a_t)-Q(Az_t+Ba_t)\|^2
    \le
    C(1+\|A\|_{\rm op}^2)\eta .
\]
Finally, for a learned predictor \(F\) satisfying \(\E\|F(h(z_t),a_t)-F_h^\star(h(z_t),a_t)\|^2\le\Delta_{\rm pred}\), the triangle inequality yields
\begin{equation}
\begin{aligned}
    \E\!\left\|F(h(z_t),a_t)-Q(Az_t+Ba_t)\right\|^2
    &\le C\Delta_{\rm pred}\\
    &\quad+C(1+\|A\|_{\rm op}^2)\eta .
\end{aligned}
\end{equation}
This completes the proof.
\end{proof}

\section{Proof of Theorem~\ref{thm:counterfactual-amplification}}
\label{app:proof-counterfactual-amplification}

\begin{proof}
		Let \(Y_t=Qz_t\). Under the identified coordinates,
	\[
	Y_{t+1}
	=
	QAz_t+QBa_t+Q\xi_t
	=
	F^\star(Y_t,a_t)+Q\xi_t.
	\]
	Since \(\xi_t\) is independent of \((z_t,a_t)\) and has zero mean, \(F^\star(Y_t,a_t)\) is the conditional mean of \(Y_{t+1}\) given \((Y_t,a_t)\).
	
	Consequently, the conditional-mean orthogonality identity gives, for every square-integrable predictor \(F\). Let
	\(d_F(y,a):=F(y,a)-F^\star(y,a)\).
	\begin{align}
		\mathcal E_\pi(F)
		&=
		\E_{P_\pi}
		\left[
		\|d_F(Y_t,a_t)\|^2
		\right].
		\label{eq:conditional-mean-orthogonality}
	\end{align}
	Indeed, expanding the squared error yields
	\begin{align*}
		&\E_{P_\pi}
		\left[
		\|Y_{t+1}-F(Y_t,a_t)\|^2
		\right] \\
		&\quad =
		\E_{P_\pi}
		\left[
		\|Y_{t+1}-F^\star(Y_t,a_t)\|^2
		\right]
		+
		\E_{P_\pi}
		\left[
		\|d_F(Y_t,a_t)\|^2
		\right],
	\end{align*}
	because the cross term vanishes after conditioning on \((Y_t,a_t)\).
	
	Let \(v_{\min}\in\R^m\) be a unit eigenvector of
	\(\Sigma_{\mathrm{tr}}(\pi)\) associated with its smallest eigenvalue
	\(\rho_{\mathrm{tr}}(\pi)>0\), and let \(u\in\R^d\) be any unit vector. Define
	\[
	D_\delta
	:=
	\sqrt{\frac{\delta}{\rho_{\mathrm{tr}}(\pi)}}
	\,u v_{\min}^\top
	\]
	and construct
	\[
	F_\delta(y,a)
	:=
	F^\star(y,a)
	+
	D_\delta
	\bigl(
	a-\mu_\pi(Q^\top y)
	\bigr).
	\]
	Because \((z_t,a_t)\) is jointly Gaussian, the conditional mean
	\(\mu_\pi(z)=\E[a_t\mid z_t=z]\) is linear in \(z\). Hence \(F_\delta\) is continuous and square-integrable.
	
	Let \(r_t:=a_t-\mu_\pi(z_t)\). By definition,
	\(\E[r_t\mid z_t]=0\), and
	\[
	\E[r_t r_t^\top]
	=
	\E_{z_t}
	\left[
	\Cov_\pi(a_t\mid z_t)
	\right]
	=
	\Sigma_{\mathrm{tr}}(\pi).
	\]
	Using Eq.~\eqref{eq:conditional-mean-orthogonality},
	\begin{align*}
		\mathcal E_\pi(F_\delta)
		&=
		\E
		\left[
		\|D_\delta r_t\|^2
		\right] \\
		&=
		\tr
		\left(
		D_\delta
		\Sigma_{\mathrm{tr}}(\pi)
		D_\delta^\top
		\right) \\
		&=
		\frac{\delta}{\rho_{\mathrm{tr}}(\pi)}
		\tr
		\left(
		u v_{\min}^\top
		\Sigma_{\mathrm{tr}}(\pi)
		v_{\min}u^\top
		\right) \\
		&=
		\frac{\delta}{\rho_{\mathrm{tr}}(\pi)}
		\rho_{\mathrm{tr}}(\pi)
		\|u\|^2
		=
		\delta.
	\end{align*}
	
	Under the counterfactual distribution,
	\(a_{\rm cf}-\mu_\pi(z)=\eta_{\rm cf}\) with
	\(\eta_{\rm cf}\sim\Normal(0,I_m)\). Therefore,
	\begin{align*}
		\mathcal E_{\rm cf}(F_\delta)
		&=
		\E
		\left[
		\|D_\delta\eta_{\rm cf}\|^2
		\right] \\
		&=
		\tr(D_\delta D_\delta^\top) \\
		&=
		\|D_\delta\|_F^2 \\
		&=
		\frac{\delta}{\rho_{\mathrm{tr}}(\pi)}
		\|u\|^2
		\|v_{\min}\|^2 \\
		&=
		\frac{\delta}{\rho_{\mathrm{tr}}(\pi)}.
	\end{align*}
	This proves the result.
\end{proof}

\begin{proof}[Proof of Corollary~\ref{cor:exploration-noise-scaling}]
	For the behavior policy
	\[
	a_t
	=
	\sqrt{1-\sigma^2}\,Kz_t
	+
	\sigma\eta_t,
	\]
	the conditional mean is
	\(\mu_{\pi_\sigma}(z_t)=\sqrt{1-\sigma^2}\,Kz_t\). Since
	\(\eta_t\sim\Normal(0,I_m)\) is independent of \(z_t\),
	\[
	\Cov_{\pi_\sigma}(a_t\mid z_t)
	=
	\sigma^2I_m.
	\]
	It follows immediately that
	\[
	\Sigma_{\mathrm{tr}}(\pi_\sigma)
	=
	\E_{z_t}
	\left[
	\Cov_{\pi_\sigma}(a_t\mid z_t)
	\right]
	=
	\sigma^2I_m
	\]
	and hence
	\[
	\rho_{\mathrm{tr}}(\pi_\sigma)
	=
	\lambda_{\min}
	\left(
	\Sigma_{\mathrm{tr}}(\pi_\sigma)
	\right)
	=
	\sigma^2.
	\]
	For every \(\sigma>0\), applying
	Theorem~\ref{thm:counterfactual-amplification} gives
	\(\mathcal E_{\rm cf}(F_\delta)=\delta/\sigma^2\).
	
	When \(\sigma=0\), the action is the deterministic function \(a_t=Kz_t\), so \(a_t-\mu_{\pi_0}(z_t)=0\). Let \(u\in\R^d\) and \(v\in\R^m\) be arbitrary unit vectors, and define
	\[
	\widetilde F(y,a)
	:=
	F^\star(y,a)
	+
	uv^\top
	\bigl(
	a-\mu_{\pi_0}(Q^\top y)
	\bigr).
	\]
	Then \(\widetilde F=F^\star\) under \(P_{\pi_0}\), and therefore
	\(\mathcal E_{\pi_0}(\widetilde F)=0\). Under the counterfactual action distribution,
	\[
	\mathcal E_{\rm cf}(\widetilde F)
	=
	\E
	\left[
	\|uv^\top\eta_{\rm cf}\|^2
	\right]
	=
	1.
	\]
	Thus distinct continuous predictors attain the same on-policy risk while producing different counterfactual predictions, proving structural non-identifiability at \(\sigma=0\).
\end{proof}
\section{Experimental Details}
\label{app:experimental-details}

\paragraph{Model, optimization, and evaluation.}
All experiments use two-dimensional latent states, observations, and actions. The encoder and action-conditioned predictor use 8-layer SiLU MLPs with hidden width 512. Both representation and coverage runs use \(100{,}000\) one-step transitions, \(5{,}000\) AdamW updates, gradient clipping at 1.0, and a 500-step warmup followed by cosine decay. They use batch size 1024, learning rate \(10^{-3}\), and weight decay \(10^{-5}\). The objective is
\begin{equation}
\begin{aligned}
    \mathcal L
    &=\mathcal L_{\rm pred}
    +\lambda_{\rm white}\mathcal L_{\rm white}
    +\lambda_{\rm gauss}\mathcal L_{\rm gauss},\\
    \lambda_{\rm white}&=1,\qquad
    \lambda_{\rm gauss}=50.
\end{aligned}
\end{equation}
where \(\mathcal L_{\rm gauss}\) is a sliced characteristic-function regularizer toward a standard Gaussian. Each run uses \(10{,}000\) evaluation transitions and a separate set of \(5{,}000\) states to fit the orthogonal Procrustes alignment \(Q^\star\). Both sweeps use five independent runs for each observation map.

For \(y_i=h(g(z_i))\), the normalized representation and controlled-transition errors are
\begin{equation}
    \begin{aligned}
        E_{\rm rep}
        &:=
        \frac{\sum_i\|y_i-Q^\star z_i\|^2}
             {\sum_i\|z_i\|^2},\\
        E_{\rm tr}
        &:=
        \frac{\sum_i\|F(y_i,a_i)-Q^\star(Az_i+Ba_i)\|^2}
             {\sum_i\|Az_i+Ba_i\|^2}.
    \end{aligned}
    \label{eq:appendix-experimental-errors}
\end{equation}
All transition metrics compare predictions with the noise-free conditional mean, not with sampled next states.

\subsection{Representation Sweep}
\label{app:representation-details}

Let \(R_\theta\) denote a two-dimensional rotation and define
\begin{equation}
    \begin{aligned}
        A(q)&=R_{30^\circ}D(q)R_{20^\circ}^\top,\\
        B(q)&=R_{30^\circ}D(q)R_{-20^\circ}^\top,\\
        D(q)&=\diag\!\left(\sqrt{0.30},\sqrt{q/2}\right).
    \end{aligned}
    \label{eq:appendix-representation-dynamics}
\end{equation}
Actions are independent standard Gaussian, and the process covariance is
\(\Cov(\xi_t)=I-AA^\top-BB^\top\). Consequently, \(z_t\), \(a_t\), and \(z_{t+1}\) are standard Gaussian in marginal distribution, while
\[
    \operatorname{spec}(AA^\top+BB^\top)=\{0.6,q\},
    \qquad
    \gamma_{\rm rep}(q)=q-0.36.
\]
We use
\[
    q\in\{0.1, 0.2, 0.3, 0.36,0.4, 0.5, 0.6\}.
\]

The four observation maps are:
\begin{align*}
    \text{spiral:}\quad
        &u=R_{0.8\pi\|z\|}z,\\[-2pt]
        &g(z)=(1+0.45\tanh(u_1))u,\\
    \text{parabolic:}\quad
        &g(z)=(z_1,\ z_2+0.75z_1^2),\\
    \text{sinusoidal:}\quad
        &g(z)=(z_1+1.75\sin(1.5z_2),\ z_2),\\
    \text{wave:}\quad
        &u_1=z_1+0.65\sin(1.4z_2),\\[-2pt]
        &u_2=z_2+0.55\sin(1.2u_1+0.4),\\[-2pt]
        &g(z)=(u_1+0.40\sin(1.7u_2-0.3),\ u_2).
\end{align*}
Figure~\ref{fig:cwm-representation-geometry} uses \(q=0.50\).

\subsection{Coverage Sweep}
\label{app:coverage-details}

We use the state-dependent Gaussian behavior policy
\begin{equation}
\begin{aligned}
    a_t&=\sqrt{1-\sigma^2}\,Kz_t+\sigma\eta_t,\\
    &\eta_t\sim\Normal(0,I),\quad \eta_t\perp z_t,\quad KK^\top=I.
\end{aligned}
\label{eq:behavior-policy}
\end{equation}
The coverage levels and feedback matrix are
\[
    \sigma\in\{0,0.2,0.4,0.6,0.8,1\},
    \qquad
    K=R_{45^\circ},
\]
We reuse Eq.~\eqref{eq:appendix-representation-dynamics} with \(q=0.50\) and run the coverage sweep independently for all four observation maps listed above. Under the state-dependent policy in Eq.~\eqref{eq:behavior-policy}, the predictable covariance is
\begin{equation}
    R_\pi(\sigma)
    =
    AA^\top+BB^\top
    +\sqrt{1-\sigma^2}
      \left(AK^\top B^\top+BKA^\top\right).
    \label{eq:appendix-coverage-covariance}
\end{equation}
For each \(\sigma\), we set the independent zero-mean process-noise covariance to
\[
    \Cov(\xi_t)=I-R_\pi(\sigma)
\]
so that current and next states have the same standard-Gaussian marginal. The controlled conditional mean \(Az+Ba\), observation map, and action marginal are fixed across the sweep, but the full stochastic transition kernel is therefore indexed by \(\sigma\) through this stationarity-preserving noise adjustment. The process-noise trace changes mildly, from \(0.8\) at \(\sigma=0\) to \(0.9\) at \(\sigma=1\). The representation margin \(\gamma_{\rm rep}(\sigma)=\lambda_{\min}(R_\pi(\sigma))-\lambda_{\max}(R_\pi(\sigma))^2\) ranges from \(0.118\) to \(0.140\) and is not used as a training or model-selection gate.

The counterfactual actions are sampled once from \(\Normal(0,2^2I)\) and shared across all coverage levels and training runs. This intervention probes weakly covered parts of the action plane while keeping the evaluation distribution fixed across the sweep. For held-out states \(z_i\), let \(y_i=h(g(z_i))\), let \(a_i^{\rm beh}\) denote the corresponding behavior action, and let \(a_i^{\rm cf}\sim\Normal(0,2^2I)\) denote the counterfactual action. The per-coordinate transition errors are
\begin{equation}
\begin{aligned}
    E_{\rm trans}^{\rm beh}
    &=\frac{1}{nd}\sum_{i=1}^{n}
      \bigl\|F(y_i,a_i^{\rm beh})-Q^\star(Az_i+Ba_i^{\rm beh})\bigr\|^2,\\
    E_{\rm trans}^{\rm cf}
    &=\frac{1}{nd}\sum_{i=1}^{n}
      \bigl\|F(y_i,a_i^{\rm cf})-Q^\star(Az_i+Ba_i^{\rm cf})\bigr\|^2.
\end{aligned}
\label{eq:appendix-coverage-errors}
\end{equation}
Their ratio \(E_{\rm trans}^{\rm cf}/E_{\rm trans}^{\rm beh}\) is the empirical counterfactual amplification.

To assess identification of the state and action components separately, we sample independent probe actions \(a_i^{\rm probe}\sim\Normal(0,I)\) and fit
\[
    F(y_i,a_i^{\rm probe})
    \approx \widehat M y_i+\widehat N a_i^{\rm probe}
\]
by least squares. The relative component error is
\begin{equation}
\begin{aligned}
    E_T&=\frac{\|\widehat T-T^\star\|_F}{\|T^\star\|_F},\\
    (\widehat T,T^\star)&\in
    \left\{\left(\widehat M,Q^\star A Q^{\star\top}\right),
    \left(\widehat N,Q^\star B\right)\right\}.
\end{aligned}
\label{eq:appendix-component-errors}
\end{equation}
The counterfactual field in Figure~\ref{fig:cwm-behavior-coverage} uses \(z_0=(1,0)\) and an action grid over \([-2.2,2.2]^2\).

\subsection{Planning Probe}
\label{app:planning-details}

The planning visualization uses the seed-0 spiral-map model at each coverage level. We set \(z_{\rm init}=(0,0)\), horizon \(H=6\), and visualization goal \(z_g=(0.85,0.55)\). The current and goal observations are encoded as \(y_{\rm init}=h(g(z_{\rm init}))\) and \(y_g=h(g(z_g))\). The common candidate bank contains 210 constant-action sequences: seven radii uniformly spaced from \(0\) to \(1.4\), crossed with 30 uniformly spaced directions. For each candidate, the learned model is rolled forward from \(y_{\rm init}\), and the selected sequence minimizes \(\|\widehat y_H-y_g\|\). Thus neither the true latent coordinates nor \(Q^\star\) enter action selection. After selection, \(Q^{\star\top}\) is used only to display predicted endpoints in the latent coordinate system. The executed endpoint is obtained from the true noise-free recursion \(z_{t+1}=Az_t+Ba_t\).

The aggregate evaluation uses all five seeds for all four observation maps. Each of the 180 nonzero endpoints reachable by the common candidate bank is used once as a goal, and we report the mean executed terminal error over this goal set for each checkpoint. Because every goal is generated by a candidate in the bank, the candidate-bank oracle error is zero. Figure~\ref{fig:cwm-planning-curve} reports the mean and bootstrap 95\% confidence interval across seeds.

\end{document}